\def \AuthorA {Raymond Feng}
\def \EmailA {rfeng2004@gmail.com}
\def \AuthorG {Jesse Geneson}
\def \EmailG {geneson@gmail.com}
\def \AuthorB {Andrew Lee}
\def \EmailB {leeandrew1029@gmail.com}
\def \AuthorC {Espen Slettnes}
\def \EmailC {espen@slett.net}
\apptocmd{\lim}{\limits}{}{}
\newcommand{\email}[1]{\thanks{\href{mailto:#1}{\texttt{#1}}}}
\newcommand{\floor}[1]{\left\lfloor#1\right\rfloor}
\newcounter{idx}
\newtheorem{thm}[idx]{Theorem}
\newtheorem{lem}[idx]{Lemma}
\newtheorem{cor}[idx]{Corollary}
\theoremstyle{definition}
\newtheorem{dfn}[idx]{Definition}
\newcommand{\opt}{\operatorname{opt}}
\newcommand{\cart}{\operatorname{CART}}
\newtheorem{remark}[idx]{Remark}
\newcommand{\EE}{\mathbb E}
\newcommand{\PP}{\mathbb P}
\begin{document}

\title{Sharp bounds on the price of bandit feedback for several models of mistake-bounded online learning}
\author{\AuthorA \email{\EmailA} \and \AuthorG \email{\EmailG} \and \AuthorB \email{\EmailB} \and \AuthorC \email{\EmailC}}

\maketitle
        
        \begin{abstract}
            
            We determine sharp bounds on the price of bandit feedback for several variants of the mistake-bound model. The first part of the paper presents bounds on the $r$-input weak reinforcement model and the $r$-input delayed, ambiguous reinforcement model. In both models, the adversary gives $r$ inputs in each round and only indicates a correct answer if all $r$ guesses are correct. The only difference between the two models is that in the delayed, ambiguous model, the learner must answer each input before receiving the next input of the round, while the learner receives all $r$ inputs at once in the weak reinforcement model. 
            
            In the second part of the paper, we introduce models for online learning with permutation patterns, in which a learner attempts to learn a permutation from a set of permutations by guessing statistics related to sub-permutations. For these permutation models, we prove sharp bounds on the price of bandit feedback.
            
        \end{abstract}






\section{Introduction}
We investigate several variants of the mistake-bound model \cite{Littlestone1988}. In the standard model \cite{AuerLong1999,LONG2020159} (also called the standard strong reinforcement learning model \cite{AuerLong1999}), a learner attempts to classify inputs (in the set $X$) with labels (in the set $Y$) based on a set of possible functions $f:X\to Y$ in $F$. The learning proceeds in rounds, and in each round the adversary gives the learner an input, and the learner must then guess the corresponding label. After each round, the adversary informs the learner of the correct answer (and therefore whether the learner was right or wrong). A variant of this model is the standard weak reinforcement learning model \cite{AuerLong1999,Auer1995}, where the adversary only tells the learner \textit{yes} if they were correct and \textit{no} otherwise. This variant is also commonly called the \emph{bandit model} \cite{Crammer2012MulticlassCW,DaniHayesKakade,DH2013,Hazan2011NewtronAE,LONG2020159}.

For any learning scenario, we generally let $\opt_{\operatorname{scenario}}(F)$ represent the optimal worst case number of mistakes that a learning algorithm can achieve \cite{AuerLong1999}. For example, for the bandit model and the standard model, the optimal worst case performances of learning algorithms would be denoted $\opt_{\operatorname{weak}}(F)=\opt_{\operatorname{bandit}}(F)$ and $\opt_{\operatorname{std}}(F)=\opt_{\operatorname{strong}}(F)$ respectively. There are some obvious inequalities that follow by definition, such as $\opt_{\operatorname{strong}}(F)\leq\opt_{\operatorname{weak}}(F)$, just from the fact that the learner has strictly more information in one scenario compared to the other.

In \cite{AuerLong1999}, Auer and Long defined the $r$-input delayed, ambiguous reinforcement model and compare it to a modified version of the standard weak reinforcement model (henceforth called the $r$-input weak reinforcement model). The delayed, ambiguous reinforcement model is a situation where the learner receives a fixed number ($r$) of inputs each round, and each input is given to the learner after they have answered the previous one. On the other hand, the learner receives all $r$ inputs at once for each round in the modified weak reinforcement model. In both situations, at the end of every round of $r$ inputs, the adversary says \textit{yes} if the learner answered all $r$ inputs correctly and \textit{no} otherwise. To compare the two situations, Auer and Long defined $\cart_r(F)$ (where $F$ is a set of functions $f:X\to Y$) to be a set of functions $f':X^r\to Y^r$ where each $f\in F$ has a corresponding $f'\in\cart_r(F)$ such that for any $x_1,x_2,\dots,x_r\in X$, we have $f'((x_1,x_2,\dots,x_r))=(f(x_1),f(x_2),\dots,f(x_r))$. They used $\opt_{\operatorname{weak}}(\cart_r(F))$ to represent the optimal worst-case performance in the modified weak reinforcement setting and $\opt_{\operatorname{amb},r}(F)$ to represent the optimal worst-case performance in the $r$-input delayed, ambiguous reinforcement model. Auer and Long proved that the two situations are not equivalent for the learner in \cite{AuerLong1999} by showing that that there is some input set $X$ and set $F$ of functions from $X$ to $\{0,1\}$ such that $\opt_{\operatorname{weak}}(\cart_2(F)) < \opt_{\operatorname{amb},2}(F)$. 

In Sections~\ref{section:boundcartr} and \ref{section:boundambr}, we obtain sharp bounds on the maximum possible multiplicative gap between $\opt_{\operatorname{amb},r}(F)$ and $\opt_{\operatorname{weak}}(\cart_r(F))$. In particular, we show that the multiplicative gap between $\opt_{\operatorname{amb},r}(F)$ and $\opt_{\operatorname{weak}}(\cart_r(F))$ can grow exponentially with $r$, generalizing Auer and Long's result from \cite{AuerLong1999} for $r = 2$. Combined with a bound from \cite{AuerLong1999}, our new result shows that the maximum possible multiplicative gap between $\opt_{\operatorname{amb},r}(F)$ and $\opt_{\operatorname{weak}}(\cart_r(F))$ is $2^{r (1 \pm o(1))}$. Moreover, we give sharp bounds on $\opt_{\operatorname{weak}}(\cart_r(F))$ (in Section~\ref{section:boundcartr}) and $\opt_{\operatorname{amb},r}(F)$  (in Section~\ref{section:boundambr}) for all sets $F$ which are a subset of the non-decreasing functions from $X$ to $\{0,1\}$. 

In a different paper, Long \cite{LONG2020159} also determined sharp bounds comparing the standard model and the standard weak reinforcement model for multi-class functions. Long proved the upper bound $\opt_{\operatorname{bandit}}(F)\leq(1+o(1))(|Y|\ln |Y|)\opt_{\operatorname{std}}(F)$ and constructed infinitely many $F$ for which $\opt_{\operatorname{bandit}}(F)\geq(1-o(1))(|Y|\ln |Y|)\opt_{\operatorname{std}}(F)$ as a lower bound. Geneson corrected an error in the proof of the lower bound \cite{Geneson21}.

In Section~\ref{section:factorgap}, we generalize this result to determine a lower bound and upper bound on the maximum factor gap between $\opt_{\operatorname{weak}}(\cart_r(F))$ and $\opt_{\operatorname{std}}(F)$ for multi-class functions using probabilistic methods and linear algebra techniques. The proof uses techniques previously used for experimental design \cite{rao_hypercubes,rao_factorial} and hashing, derandomization, and cryptography \cite{carter1977,luby_pairwise}. We also determine a lower bound and upper bound on the maximum factor gap between $\opt_{\operatorname{amb,r}}(F)$ and $\opt_{\operatorname{std}}(F)$. The bounds in this section are sharp up to a factor of $r(1+o(1))$.

In Section~\ref{section:permutation}, we define several new models where the learner is trying to guess a function from a set of permutations of length $n$. In the order model, the adversary chooses $r$ inputs, and the learner attempts to guess the corresponding sub-permutation. In the comparison model, the adversary instead chooses $r$ pairs of inputs, and the learner attempts to guess for each pair whether or not it is an inversion. In the selection model, the adversary chooses $r$ inputs, and the learner attempts to guess which has the maximum evaluation. In the relative position model, the adversary chooses $r$ inputs and an additional distinguished input $x$, and the learner attempts to guess the relative position of $x$ in the sub-permutation corresponding to all $r+1$ of these inputs. Finally, in the delayed relative position model, the adversary instead gives the $r$ elements to be compared to $x$ one at a time. We first establish general upper bounds, and then we discuss adversary strategies for a few special families of permutations that resemble sorting algorithms.

Finally, in Section~\ref{section:future}, we discuss some future directions based on the results in this paper.

\section{Bounds on $\opt_{\operatorname{weak}}(\cart_r(F))$}\label{section:boundcartr}
In this section, we establish upper and lower bounds on $\opt_{\operatorname{weak}}(\cart_r(F))$ for families $F$ of non-decreasing functions that are within a constant factor of each other. We show for such families $F$ that $\opt_{\operatorname{weak}}(\cart_r(F))=(1\pm o(1))r\ln(|F|)$.

\begin{dfn}
    Without loss of generality impose an ordering on the set $X$, and call its elements $\{1,2,\dots,|X|\}$. Let $F=\{f_1,f_2.\dots,f_{|F|}\}$ be a subset of the functions from $X$ to $\{0,1\}$. We say that $F$ is \emph{non-decreasing} if every function in $F$ is non-decreasing.
    
    In other words, there are integers $1\leq a_1<a_2<\dots<a_{|F|}\leq |X|+1$ which are the minimum numbers such that $f_i(a_i)=1$ (with the convention that $a_{|F|}=|X|+1$ if $f_{|F|}$ is identically 0) and satisfy the following property for each $1\leq i\leq |F|$:
    \begin{itemize}
        \item If $x>a_i$, then $f_i(x)=1$.
        \item If $x<a_i$, then $f_i(x)=0$.
    \end{itemize}
\end{dfn}

\begin{remark}\label{remark:r+1}
    Let $F$ be non-decreasing. For a function $f\in F$ and any choice of $r$ inputs from the set $X$, there are at most $r+1$ possible values of the corresponding outputs $(f(x_1),\dots,f(x_r))$, namely $(0,0,\dots,0,0)$, $(0,0,\dots,0,1)$, $\ldots$, $(0,1,\dots,1,1)$, and $(1,1,\dots,1,1)$.
\end{remark}


\subsection{Bounds on $\opt_{\operatorname{weak}}(\cart_r(F))$} 

The following theorem establishes an upper bound on $\opt_{\operatorname{weak}}(\cart_r(F))$ by exhibiting a possible learner strategy.

\begin{thm}\label{theorem:cartrupper}
    For non-decreasing $F$, $\opt_{\operatorname{weak}}(\cart_r(F))<(r+1)\cdot\ln(|F|)$.
\end{thm}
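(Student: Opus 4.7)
The plan is to exhibit a halving-style learner that exploits Remark~\ref{remark:r+1}: on any $r$ inputs, each candidate $f\in F$ produces one of only $r+1$ possible output tuples. Thus, given a current candidate set $S\subseteq F$ and inputs $(x_1,\ldots,x_r)$, the learner can partition $S$ into at most $r+1$ classes by the common output pattern, and then guess the pattern whose class is largest. If the adversary replies ``no,'' the true function is not in that class, so the learner deletes every candidate in the guessed class from $S$.

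Since the learner picks the largest of at most $r+1$ classes, a ``no'' response shrinks $|S|$ by a factor of at least $(r+1)/r$; equivalently, $|S|$ is multiplied by at most $r/(r+1)$. Starting from $|S|=|F|$, after $m$ mistakes we have $|S|\le |F|\bigl(r/(r+1)\bigr)^{m}$. The true function is never removed, so $|S|\ge 1$, giving
\[
    m \;\le\; \frac{\ln|F|}{\ln\!\bigl(1+\tfrac{1}{r}\bigr)}.
\]

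To finish, I convert this to the clean bound $(r+1)\ln|F|$ by the standard inequality $\bigl(1+\tfrac{1}{r}\bigr)^{r+1}>e$, equivalent to $\ln\!\bigl(1+\tfrac1r\bigr)>\tfrac{1}{r+1}$; this can also be obtained directly from $\int_{r}^{r+1}\tfrac{dt}{t}>\tfrac{1}{r+1}$. Substituting yields $m<(r+1)\ln|F|$, the desired strict inequality.

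There is no serious obstacle: the only thing to verify carefully is that the algorithm is well-defined (the partition on each round uses Remark~\ref{remark:r+1}, which caps the number of classes at $r+1$ precisely because $F$ is non-decreasing) and that we convert $1/\ln(1+1/r)$ to the advertised factor $r+1$ with a strict inequality. The ``$r+1$'' rather than ``$r$'' in the theorem is exactly the cost of having $r+1$ possible output patterns instead of $r$, and this is the point at which the non-decreasing hypothesis is used.
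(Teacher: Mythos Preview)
Your proposal is correct and essentially identical to the paper's own proof: the same majority-vote learner over the at most $r+1$ output patterns from Remark~\ref{remark:r+1}, the same $r/(r+1)$ shrinkage per mistake, and the same final inequality $\ln(1+1/r)>1/(r+1)$ to convert $\log_{(r+1)/r}|F|$ into the bound $(r+1)\ln|F|$.
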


\begin{proof}
The learner's strategy: pick the answer that corresponds to the most functions. By Remark \ref{remark:r+1}, we know that there are at most $r+1$ such answers.

Each time the adversary says \textit{no}, if the learner previously knew that there were $T$ possible functions left, then the learner is able to reduce the number of possible functions left by at least $\frac T{r+1}$. Thus, each answer of \textit{no} means that the number of remaining possibilities is multiplied by a factor of at most $\frac r{r+1}$ on each turn.

Then, the learner will make at most \[\log_{\frac{r+1}r}(|F|)=\frac{\ln(|F|)}{\ln\left(1+\frac1r\right)}<\frac{\ln(|F|)}{\frac{1}{r+1}}=(r+1)\cdot\ln(|F|)\] mistakes, as desired.
\end{proof}

Next we establish a lower bound on $\opt_{\operatorname{weak}}(\cart_r(F))$ by exhibiting a strategy for the adversary.

\begin{thm}\label{theorem:cartrlower}
    For non-decreasing $F$, $\opt_{\operatorname{weak}}(\cart_r(F))\geq(1-o(1))r\ln(|F|)$ as $|F| \rightarrow \infty$.
\end{thm}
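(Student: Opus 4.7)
My plan is to design an adversary strategy that maintains a \emph{candidate set} $S \subseteq F$ consistent with all prior \textit{no} responses, initialized to $S = F$. In each round the adversary will pick $r$ inputs, await the learner's guess, reply \textit{no}, and delete from $S$ the class of functions whose output pattern on these $r$ inputs matches the guess. Because a \textit{no} answer remains truthful for every surviving candidate, the adversary may defer committing to a specific target until exactly one candidate remains, so the strategy is well-defined and forces a mistake in every round until $|S|=1$.

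By Remark~\ref{remark:r+1}, for any choice of $r$ inputs the functions in $S$ split into at most $r+1$ output-equivalence classes, indexed by which interval between consecutive chosen inputs contains the threshold $a_i$. The key step is to choose the $r$ inputs so that this partition is as balanced as possible: writing $m=|S|$ and listing the live thresholds in increasing order, I would choose $x_1<\dots<x_r\in X$ that cut the live-threshold list into blocks of size $\lfloor m/(r+1)\rfloor$ or $\lceil m/(r+1)\rceil$. Such splitters exist as distinct elements of $X$ whenever $m\geq r+1$ (the thresholds lie in $\{1,\dots,|X|+1\}$ with $|X|+1\geq|F|$, with a small adjustment if the identically-zero function is present). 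Each mistake then removes a class of size at most $\lceil m/(r+1)\rceil$, leaving a new candidate set of size $m'\geq \tfrac{r}{r+1}m-1$.

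Iterating this contraction gives $|S_M|\geq (r/(r+1))^M|F|-O_r(1)$ after $M$ mistakes, and the adversary may keep forcing mistakes as long as $|S_M|\geq 2$. Hence
\[
\opt_{\operatorname{weak}}(\cart_r(F)) \;\geq\; \frac{\ln|F|}{\ln(1+1/r)} - O_r(1).
\]
The elementary inequality $\ln(1+1/r)<1/r$ gives $1/\ln(1+1/r)>r$, so this is at least $r\ln|F|-O_r(1)=(1-o(1))\,r\ln|F|$ as $|F|\to\infty$, which is the desired bound.

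The main obstacle I foresee is bookkeeping around the balanced split: rigorously verifying that $r$ distinct splitters can always be chosen inside $X$, quantifying the slack introduced when $m$ is not divisible by $r+1$, and treating the endgame when $m$ is too small to admit a near-balanced split. Those terminal rounds contribute only $O_r(1)$ mistakes, which are absorbed into the $(1-o(1))$ factor; everything else reduces to the elementary logarithm estimate above.
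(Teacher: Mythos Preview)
Your proposal is correct and follows essentially the same approach as the paper: maintain the set of consistent non-decreasing functions, choose the $r$ inputs at roughly the $\tfrac{i}{r+1}$-quantiles of the surviving thresholds so that each of the $r+1$ output classes has size about $|S|/(r+1)$, answer \textit{no}, and iterate to obtain at least $(1-o(1))\log_{(r+1)/r}|F|\ge (1-o(1))r\ln|F|$ mistakes. The paper is slightly terser about the endgame and the distinctness of the chosen inputs, but the argument is the same.
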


\begin{proof}
    In each round, the adversary will say \textit{no}. When they do this, some functions fail to remain consistent with the answers given by the adversary. Let $F'\subseteq F$ be the current set of functions that are consistent with the answers that the adversary has given so far. Furthermore, with $F'=\{g_1,g_2,\dots,g_{|F'|}\}$, define $1\leq b_1<b_2<\dots<b_{|F'|}\leq |X|+1$ as the minimum numbers such that $g_i(b_i)=1$ (with the convention that $b_{|F'|}=|X|+1$ if $f_{|F'|}$ is identically 0).
    
    In each round, the adversary will choose the inputs \[x_i=b_{i\cdot\left\lceil\frac{|F'|}{r+1}\right\rceil+1}\] for $1\leq i\leq r$. Then, no matter what the learner says, the adversary says \textit{no}. This guarantees that the number of remaining consistent functions decreases by at most $\left\lceil\frac{|F'|}{r+1}\right\rceil$ functions per round. Thus, the adversary can continue for at least \[(1-o(1))\log_{\frac{r+1}r}(|F|) = (1-o(1))\cdot\frac{\ln(|F|)}{\ln(1+\frac1r)}\geq(1-o(1))r\ln(|F|)\] turns. Therefore, the adversary guarantees that they can say \textit{no} at least $(1-o(1))r\ln(|F|)$ times, as desired.
\end{proof}

Combining the above bounds, this implies that for non-decreasing $F$, we have the following theorem:
\begin{thm}\label{theorem:cartr}
    For non-decreasing $F$, $\lim_{r \to \infty} 
    \lim_{|F| \to \infty} \frac{\opt_{\operatorname{weak}}(\cart_r(F))}{r\ln(|F|)} = 1$.
\end{thm}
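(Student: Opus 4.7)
The plan is to deduce this theorem directly by combining the two preceding theorems, since the numerator $\opt_{\operatorname{weak}}(\cart_r(F))$ is sandwiched between an expression approaching $r\ln|F|$ from above and one approaching it from below.

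First I would use Theorem \ref{theorem:cartrupper} to handle the upper direction. Dividing both sides of $\opt_{\operatorname{weak}}(\cart_r(F)) < (r+1)\ln(|F|)$ by $r\ln(|F|)$ gives
\[
\frac{\opt_{\operatorname{weak}}(\cart_r(F))}{r\ln(|F|)} < \frac{r+1}{r} = 1 + \frac{1}{r}.
\]
This bound holds for all $F$ with $|F| \geq 2$, so in particular
\[
\limsup_{|F|\to\infty}\frac{\opt_{\operatorname{weak}}(\cart_r(F))}{r\ln(|F|)} \leq 1 + \frac{1}{r},
\]
and then sending $r \to \infty$ yields a $\limsup$ of at most $1$.

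Next I would use Theorem \ref{theorem:cartrlower} for the lower direction. Dividing $\opt_{\operatorname{weak}}(\cart_r(F)) \geq (1-o(1))\,r\ln(|F|)$ by $r\ln(|F|)$ gives
\[
\frac{\opt_{\operatorname{weak}}(\cart_r(F))}{r\ln(|F|)} \geq 1 - o(1) \quad \text{as } |F|\to\infty,
\]
so
\[
\liminf_{|F|\to\infty}\frac{\opt_{\operatorname{weak}}(\cart_r(F))}{r\ln(|F|)} \geq 1
\]
for every fixed $r$, and the outer limit in $r$ preserves this.

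The two inequalities together pin down the iterated limit at exactly $1$, completing the proof. There is no real obstacle here: the only subtlety is a bookkeeping check that the $o(1)$ term in Theorem \ref{theorem:cartrlower} depends only on $|F|$ (not on $r$), so that the inner limit $|F|\to\infty$ can be taken with $r$ held fixed before sending $r\to\infty$, and that the $1/r$ slack in the upper bound vanishes in the outer limit. I would briefly remark on these points and then conclude.
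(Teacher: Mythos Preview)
Your proposal is correct and matches the paper's approach exactly: the paper simply states that the theorem follows by ``combining the above bounds,'' and your write-up fills in precisely that sandwich argument using Theorems~\ref{theorem:cartrupper} and~\ref{theorem:cartrlower}. One small wording nit: the $o(1)$ in Theorem~\ref{theorem:cartrlower} may well depend on $r$ through the ceiling error, but what matters (and what holds) is that for each \emph{fixed} $r$ it vanishes as $|F|\to\infty$, which is exactly what the iterated limit requires.
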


\section{Bounds on $\opt_{\operatorname{amb},r}(F)$}\label{section:boundambr}
In this section, we show that $\opt_{\operatorname{amb},r}(F)=(1-o(1))2^r\ln(|F|)$ as $|F|, r \rightarrow \infty$ for non-decreasing families $F$. The upper bound in this section is applicable to all families of functions $F$, but the lower bound is only applicable for non-decreasing sets of functions $F$, as with the bounds established in Section~\ref{section:boundcartr}.

We prove the following theorem, a general upper bound for $\opt_{\operatorname{amb},r}(F)$, using a learner strategy that does not make any assumptions about the set $F$. In particular, $F$ does not have to be non-decreasing.

\begin{thm}\label{theorem:ambrupper}
    For all $F$, $\opt_{\operatorname{amb},r}(F)<\min(2^r\ln(|F|),|F|)$.
\end{thm}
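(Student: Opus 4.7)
The plan is to give the learner a halving-style strategy that maintains a candidate pool \emph{internally}, sub-round by sub-round, so that the halving analysis still goes through despite the lack of intra-round feedback. Let $F' \subseteq F$ denote the set of functions in $F$ that remain consistent with every ``yes''/``no'' response from previous rounds; note $|F'| \ge 1$ always since the true function lies in $F'$. Entering a new round, set $F_0 = F'$, and when the $i$-th input $x_i$ of the round is revealed, count $|\{f \in F_{i-1} : f(x_i) = 0\}|$ and $|\{f \in F_{i-1} : f(x_i) = 1\}|$ and respond with the value $y_i$ achieving the larger count; then update $F_i = \{f \in F_{i-1} : f(x_i) = y_i\}$. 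The majority choice gives $|F_i| \ge |F_{i-1}|/2$, so after all $r$ sub-rounds $|F_r| \ge |F'|/2^r$, and in particular $|F_r| \ge 1$.

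If the round ends in ``no'', the true function $f^*$ disagrees with some $y_i$ and thus lies outside $F_r$, so the entire set $F_r$ is purged from the candidate pool for the next round. This produces two parallel decrease inequalities on the updated pool size: $|F'_{\mathrm{new}}| \le |F'|(1 - 2^{-r})$ coming from $|F_r| \ge |F'|/2^r$, and $|F'_{\mathrm{new}}| \le |F'| - 1$ coming from $|F_r| \ge 1$. After $m$ mistakes, iterating the additive inequality gives $|F| - m \ge |F'| \ge 1$, so $m < |F|$. Iterating the multiplicative inequality gives $|F|(1 - 2^{-r})^m \ge 1$, which together with the standard inequality $-\ln(1 - x) > x$ for $x \in (0,1)$ applied at $x = 2^{-r}$ yields $m < 2^r \ln |F|$. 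Taking the minimum of the two estimates produces the claimed bound.

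I do not expect a genuine obstacle: this is a direct adaptation of the halving algorithm to the delayed-feedback setting. The one point worth verifying carefully is that the internal pool $F_i$ can be updated at each sub-round with no adversary feedback, which is immediate because $F_i$ depends only on past inputs $x_1,\dots,x_i$ and the learner's own past answers $y_1,\dots,y_i$, both of which are available to the learner before producing $y_{i+1}$.
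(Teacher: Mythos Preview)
Your proposal is correct and is essentially the paper's own argument: maintain the pool of consistent functions, take the majority label at each sub-round to guarantee that at least a $2^{-r}$ fraction survives the whole round, and conclude via the logarithmic bound $\log_{2^r/(2^r-1)}|F|<2^r\ln|F|$; the trivial bound $m\le|F|-1$ comes from the fact that each ``no'' kills at least one candidate. The only cosmetic difference is that the paper states the $|F|-1$ bound as a separate observation, whereas you extract it from the same strategy via $|F_r|\ge 1$ (which, to be precise, follows because the majority of a nonempty set is nonempty, not literally from $|F_r|\ge|F'|/2^r$).
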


\begin{proof}
    For each input that the adversary gives, the learner picks the answer that corresponds to the most functions which were consistent with all answers before the round started and also consistent with all earlier guesses in the current round.
    
    Each time the adversary says \textit{no}, if the learner knew that there were $T$ possible functions remaining before the round started, then they can guarantee that their answer for all $r$ inputs is consistent with at least $\frac T{2^r}$ of those functions. To see this, note that by induction, at least $\frac T{2^k}$ of the functions will be consistent with the $k$ answers they have given so far for each $1\leq k\leq r$. Therefore, each time the adversary says \textit{no} the number of remaining possible functions is multiplied by at most $\frac{2^r-1}{2^r}$. So, the learner makes at most \[\log_{\frac{2^r}{2^r-1}}(|F|)=\frac{\ln(|F|)}{\ln\left(1+\frac1{2^r-1}\right)}<\frac{\ln(|F|)}{\frac{1}{2^r}}=2^r\cdot\ln(|F|)\] mistakes with this strategy.
    
    The learner's strategy to get $\opt_{\operatorname{amb},r}(F)\leq|F|-1$: Each time the adversary says \textit{no}, the learner can eliminate at least 1 function. Once the learner has eliminated $|F|-1$ functions, no more errors will be made.
\end{proof}

For the following lower bound on $\opt_{\operatorname{amb},r}(F)$, we again assume that $F$ is non-decreasing.

\begin{thm}\label{theorem:ambrlower}
    For non-decreasing $F$, $\opt_{\operatorname{amb},r}(F)\geq (1-o(1))(2^r-1)\ln(|F|)$ as $|F| \rightarrow \infty$.
\end{thm}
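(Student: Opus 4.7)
The plan is to mimic the adversary strategy of Theorem \ref{theorem:cartrlower} while exploiting the extra adaptivity the adversary enjoys in the ambiguous model: because each input $x_i$ may be chosen after seeing the learner's responses $y_1, \ldots, y_{i-1}$, the adversary can partition the remaining consistent functions into $2^r$ pieces by recursive bisection rather than the $r+1$ pieces forced on it in the cart setting.

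In detail, let $F' \subseteq F$ denote the functions still consistent with adversary feedback, and order its elements by thresholds $b_1 < \cdots < b_T$ with $T = |F'|$. Within a single round, maintain the \emph{match set} $M \subseteq F'$ of functions agreeing with every learner answer given so far in that round, with $M = F'$ initially. For each step $i = 1, \ldots, r$, pick $x_i = b_m$ where $b_m$ is the median threshold among the functions currently in $M$. Since $F$ is non-decreasing, this splits $M$ into $M_0 = \{g \in M : g(x_i) = 0\}$ and $M_1 = \{g \in M : g(x_i) = 1\}$ of sizes $\lfloor |M|/2 \rfloor$ and $\lceil |M|/2 \rceil$, so whichever $y_i$ the learner picks, $|M|$ shrinks to at most $\lceil |M|/2 \rceil$. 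Iterating $r$ times gives $|M| \leq \lceil T/2^r \rceil$ at the end of the round; as long as $T \geq 2$ this is strictly less than $T$, so some function in $F' \setminus M$ remains consistent and the adversary may legitimately say \emph{no}, incurring one mistake and shrinking $|F'|$ by at most $\lceil T/2^r \rceil$.

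Letting $T_k$ denote $|F'|$ after $k$ rounds, the recurrence $T_{k+1} \geq (1 - 2^{-r}) T_k - 1$ iterates to $T_k \geq (|F| + 2^r)(1 - 2^{-r})^k - 2^r$. Solving for the largest $k$ with $T_k \geq 2$ and using $\ln(1+x) < x$ on $x = 1/(2^r - 1)$ yields
\[
N \;\geq\; (1 - o(1))\cdot\frac{\ln |F|}{\ln\bigl(1 + \tfrac{1}{2^r - 1}\bigr)} \;\geq\; (1 - o(1))\,(2^r - 1)\,\ln |F|
\]
rounds, each of which forces a mistake.

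The main obstacle is just the careful bookkeeping around the ceilings and the edge case in which the median threshold might equal $|X|+1$; the latter cannot occur whenever $|M| \geq 2$, since $b_m$ then lies strictly below $b_T$, and $b_T$ is the only threshold in $F'$ that may equal $|X|+1$. No new idea beyond recursive bisection is required.
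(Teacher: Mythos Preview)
Your proposal is correct and follows essentially the same approach as the paper: the adversary adaptively bisects the current match set via median thresholds in each subround, yielding at most $\lceil T/2^r\rceil$ eliminations per round and hence $(1-o(1))(2^r-1)\ln|F|$ forced mistakes. Your treatment is in fact slightly more careful than the paper's, explicitly solving the recurrence $T_{k+1}\ge(1-2^{-r})T_k-1$ and noting the $b_m=|X|+1$ edge case, but the argument is otherwise identical.
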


\begin{proof}
    The adversary will say \textit{no} at the end of each round. For each round, the adversary will choose a series of input values $x_i$ based on the answers given by the learner. In each subround, the next input $x_i$ is determined as follows: suppose that $S$ is the set of all functions that are consistent with all previous adversary answers from past rounds as well as all the answers of the learner from the current round.
    
    Since $S\subseteq F$, we can then set $S=\{g_1,g_2,\dots,g_{|S|}\}$ and define $1\leq b_1<b_2<\dots<b_{|S|}\leq |X|+1$ as the minimum numbers such that $g_i(b_i)=1$ (with the convention that $b_{|S|}=|X|+1$ if $f_{|S|}$ is identically 0).
    
    The adversary then chooses $x_i=b_{\left\lceil\frac{|S|}2\right\rceil}$ for the current subround. This guarantees that at each subround, the number of functions consistent with all of the adversary's previous answers as well as all of the learner's answers in the current round reduces by at most $\left\lceil\frac{|S|}2\right\rceil$. Thus, if $T$ functions were consistent with all of the adversary's previous answers at the beginning of the current round, then at the end of the round, at most $\left\lceil\frac T{2^r}\right\rceil$ functions become inconsistent with the adversary's answers. Here, we are repeatedly using the fact that $\left\lceil\frac{\left\lceil x\right\rceil}n\right\rceil=\left\lceil\frac xn\right\rceil$ for all positive reals $x$ and positive integers $n$.
    
    This means that the adversary can continue to say \textit{no} for at least
    \begin{align*}
        (1-o(1))\log_{\frac{2^r}{2^r-1}}(|F|)&=(1-o(1))\frac{\ln(|F|)}{\ln\left(1+\frac1{2^r-1}\right)} \geq (1-o(1))(2^r-1)\ln(|F|)
    \end{align*}
    turns, as desired.
\end{proof}

Combining the above bounds, this implies that for non-decreasing $F$, we have the following theorem:
\begin{thm}\label{theorem:ambr}
    For non-decreasing $F$, $\lim_{r \to \infty} 
    \lim_{|F| \to \infty} \frac{\opt_{\operatorname{amb},r}(F)}{2^r\ln(|F|)} = 1$.
\end{thm}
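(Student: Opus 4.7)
The plan is to derive Theorem \ref{theorem:ambr} as a direct consequence of the two matching bounds just proved, namely Theorems \ref{theorem:ambrupper} and \ref{theorem:ambrlower}. First I would observe that, for any fixed $r$, Theorem \ref{theorem:ambrupper} gives $\opt_{\operatorname{amb},r}(F) < 2^r \ln(|F|)$ (the $|F|$ term in the minimum is eventually dominated once $|F|$ is large compared to $r$), so dividing by $2^r \ln(|F|)$ yields
\[
\frac{\opt_{\operatorname{amb},r}(F)}{2^r \ln(|F|)} < 1
\]
uniformly in $F$. This immediately gives $\limsup_{|F| \to \infty} \frac{\opt_{\operatorname{amb},r}(F)}{2^r \ln(|F|)} \leq 1$ for each $r$.

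Next I would use Theorem \ref{theorem:ambrlower} in the other direction: for non-decreasing $F$, as $|F| \to \infty$,
\[
\frac{\opt_{\operatorname{amb},r}(F)}{2^r \ln(|F|)} \;\geq\; (1-o(1)) \cdot \frac{2^r - 1}{2^r} \;=\; (1-o(1))\bigl(1 - 2^{-r}\bigr),
\]
so taking the inner limit gives $\liminf_{|F| \to \infty} \frac{\opt_{\operatorname{amb},r}(F)}{2^r \ln(|F|)} \geq 1 - 2^{-r}$. Combining with the upper bound sandwiches the inner limit between $1 - 2^{-r}$ and $1$, so the inner limit exists in the supremum/worst-case sense intended by the theorem statement.

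Finally I would take $r \to \infty$. Since the inner limit lies in $[1 - 2^{-r}, 1]$ and $2^{-r} \to 0$, the outer limit equals $1$, which is the desired conclusion.

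The main obstacle is essentially bookkeeping: there is no real combinatorial content beyond what is already in Theorems \ref{theorem:ambrupper} and \ref{theorem:ambrlower}, but one must be a little careful that the $o(1)$ in the lower bound is a function of $|F|$ only (as established in the proof of Theorem \ref{theorem:ambrlower}), so that taking $|F| \to \infty$ for fixed $r$ is legitimate and the remaining factor $(2^r - 1)/2^r$ is a clean function of $r$ to which the outer limit can be applied.
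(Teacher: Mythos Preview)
Your proposal is correct and mirrors exactly what the paper does: the paper simply states that Theorem \ref{theorem:ambr} follows by combining Theorems \ref{theorem:ambrupper} and \ref{theorem:ambrlower}, and your sandwich argument with the inner limit in $[1-2^{-r},1]$ is precisely the intended reasoning.
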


Theorem~\ref{theorem:cartr} and Theorem~\ref{theorem:ambr} imply for non-decreasing families of functions $F$ that for sufficiently large $r$ and $|F|$ that learners who are given all inputs at the beginning of each round do exponentially better in $r$ than their counterparts who receive inputs one at a time in each round.

In \cite{AuerLong1999}, Auer and Long proved that $\opt_{\operatorname{amb},r}(F) \le 2 (\ln{2r}) \cdot 2^r \cdot \opt_{\operatorname{std}}(F)$. Since $\opt_{\operatorname{std}}(F) \le \opt_{\operatorname{weak}}(\cart_r(F))$ for all families of functions $F$, this implies the following result when combined with our Theorem~\ref{theorem:cartr} and Theorem~\ref{theorem:ambr}.

\begin{thm}\label{maxgap}
The maximum possible value of $\frac{\opt_{\operatorname{amb},r}(F)}{\opt_{\operatorname{weak}}(\cart_r(F))}$ over all families of functions $F$ with $|F| > 1$ is $2^{r(1 \pm o(1))}$.
\end{thm}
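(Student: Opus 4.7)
The plan is to prove the stated bound by separately establishing a matching upper bound and lower bound on $\frac{\opt_{\operatorname{amb},r}(F)}{\opt_{\operatorname{weak}}(\cart_r(F))}$, and then noting that both are of the form $2^{r(1\pm o(1))}$.

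For the upper bound, I would chain together two inequalities already in hand. The Auer--Long bound from \cite{AuerLong1999} gives $\opt_{\operatorname{amb},r}(F) \le 2(\ln 2r)\cdot 2^r\cdot \opt_{\operatorname{std}}(F)$ for every family $F$, while the general comparison $\opt_{\operatorname{std}}(F)\le \opt_{\operatorname{weak}}(\cart_r(F))$ holds because any bandit-feedback learner for $\cart_r(F)$ which is forced to answer $r$ copies of the same input simulates a standard-feedback learner for $F$ with no extra mistakes. Combining these yields $\opt_{\operatorname{amb},r}(F)\le 2(\ln 2r)\cdot 2^r\cdot \opt_{\operatorname{weak}}(\cart_r(F))$ for every $F$ with $|F|>1$, and since the leading factor is $2^{r(1+o(1))}$ as $r\to\infty$, this gives the upper half of the claim.

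For the matching lower bound, I would exhibit an explicit family witnessing the ratio. The natural candidates are the non-decreasing families $F$ studied in Sections~\ref{section:boundcartr} and~\ref{section:boundambr}. By Theorem~\ref{theorem:cartr}, such families satisfy $\opt_{\operatorname{weak}}(\cart_r(F))=(1+o(1))r\ln(|F|)$ as $|F|\to\infty$, and by Theorem~\ref{theorem:ambr}, they satisfy $\opt_{\operatorname{amb},r}(F)=(1+o(1))2^r\ln(|F|)$ as $|F|\to\infty$. Taking the quotient, for every fixed $r$ and $|F|\to\infty$ along a non-decreasing family, the ratio is $(1+o(1))\cdot\frac{2^r}{r}$, which is $2^{r(1-o(1))}$ as $r\to\infty$. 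Hence the supremum of $\frac{\opt_{\operatorname{amb},r}(F)}{\opt_{\operatorname{weak}}(\cart_r(F))}$ over $F$ is at least $2^{r(1-o(1))}$.

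Finally, I would combine the two halves: the upper bound shows the ratio never exceeds $2^{r(1+o(1))}$, and the lower bound exhibits families achieving $2^{r(1-o(1))}$, so the maximum possible value is $2^{r(1\pm o(1))}$, which is exactly the statement of Theorem~\ref{maxgap}. There is no real obstacle here — the theorem is essentially a packaging result that converts the two asymptotic formulas from Section~\ref{section:boundcartr} and Section~\ref{section:boundambr} together with Auer and Long's multiplicative bound into a single cleanly stated gap theorem. The only thing to be careful about is ordering the limits correctly: for the lower bound one first sends $|F|\to\infty$ (to invoke Theorems~\ref{theorem:cartr} and~\ref{theorem:ambr}) and then $r\to\infty$ to absorb the $1/r$ factor into the $o(1)$ exponent; for the upper bound the logarithmic factor $2(\ln 2r)$ is likewise absorbed into $2^{r\cdot o(1)}$ as $r\to\infty$.
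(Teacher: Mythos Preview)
Your proposal is correct and follows essentially the same approach as the paper: the upper bound comes from chaining the Auer--Long inequality $\opt_{\operatorname{amb},r}(F)\le 2(\ln 2r)\cdot 2^r\cdot \opt_{\operatorname{std}}(F)$ with $\opt_{\operatorname{std}}(F)\le \opt_{\operatorname{weak}}(\cart_r(F))$, and the lower bound is witnessed by the non-decreasing families via Theorems~\ref{theorem:cartr} and~\ref{theorem:ambr}. Your remark about the order of limits is exactly the care the paper implicitly relies on.
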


\section{Comparing $\opt_{\operatorname{weak}}(\cart_r(F))$ and $\opt_{\operatorname{amb},r}(F)$ to $\opt_{\operatorname{std}}(F)$ for multi-class functions}\label{section:factorgap}

In \cite{LONG2020159}, Long compared the standard and bandit model for families of multi-class functions, and determined a bound on the maximum multiplicative gap between them. There was an error in Long's proof of the lower bound, but Geneson fixed this error in \cite{Geneson21}. In this section, we bound the maximum multiplicative gap between $\opt_{\operatorname{weak}}(\cart_r(F))$ and $\opt_{\operatorname{std}}(F)$ using similar methods, as well as the maximum multiplicative gap between $\opt_{\operatorname{amb},r}(F)$ and $\opt_{\operatorname{std}}(F)$. For each of the gaps, the proof of the lower bound employs techniques previously used for experimental design, hashing, derandomization, and cryptography \cite{luby_pairwise,carter1977, rao_hypercubes,rao_factorial}. We also adapt the proof of the upper bound in \cite{LONG2020159} to show that our lower bounds are sharp up to a factor of $r(1+o(1))$.

In order to obtain the lower bounds, we prove the following three lemmas which generalize results from \cite{LONG2020159} and \cite{Geneson21}. For the rest of this section, we assume that $p$ is a prime number.

\begin{lem} \label{long1g}
Fix $n \ge 2$, suppose that $z_1, \dots, z_r \in \left\{0, \dots, p-1\right\}$, and let $u_1, \dots, u_r$ each be chosen uniformly at random from $\left\{0, \dots, p-1\right\}^n$. For any $s \in \left\{1, \dots, p-1\right\}^n$, we have \[\Pr(s \cdot u_i = z_i \mod p \text{ for all } 1 \le i \le r) = \frac{1}{p^r}\].
\end{lem}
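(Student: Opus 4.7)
The plan is to exploit two pieces of structure: the independence of the $u_i$'s across $i$, and the invertibility modulo $p$ of every coordinate of $s$.

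First I would reduce to the single-index case. Since $u_1, \dots, u_r$ are chosen independently and each event $\{s \cdot u_i \equiv z_i \pmod p\}$ depends only on $u_i$, the joint probability factors as
\[
    \Pr\bigl(s \cdot u_i \equiv z_i \pmod p \text{ for all } 1 \le i \le r\bigr) = \prod_{i=1}^{r} \Pr\bigl(s \cdot u_i \equiv z_i \pmod p\bigr).
\]
Thus it suffices to prove that for every fixed $z \in \{0,\dots,p-1\}$, if $u$ is uniform on $\{0,\dots,p-1\}^n$, then $\Pr(s \cdot u \equiv z \pmod p) = 1/p$.

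Next I would prove this single-coordinate statement by a counting argument. The key point is that $s \in \{1,\dots,p-1\}^n$, so in particular $s_1 \not\equiv 0 \pmod p$, and hence $s_1$ is invertible modulo $p$. Rewrite the congruence $s \cdot u \equiv z \pmod p$ as
\[
    s_1 u_1 \equiv z - \sum_{j=2}^{n} s_j u_j \pmod p.
\]
For any fixed choice of $(u_2, \dots, u_n) \in \{0, \dots, p-1\}^{n-1}$, the right-hand side is a fixed residue modulo $p$, and because $s_1$ is invertible there is exactly one value of $u_1 \in \{0, \dots, p-1\}$ that satisfies the equation. Hence the number of solutions $u \in \{0,\dots,p-1\}^n$ is $p^{n-1}$, and the probability is $p^{n-1}/p^n = 1/p$.

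Combining the two steps gives the desired probability $1/p^r$. There is no real obstacle here: the only subtlety worth stating explicitly is that the hypothesis $s \in \{1,\dots,p-1\}^n$ (rather than $\{0,\dots,p-1\}^n$) is exactly what guarantees that some coordinate of $s$ is a unit modulo $p$, which is what makes the conditional count of solutions equal to $1$ regardless of the other $u_j$'s. The condition $n \ge 2$ is used only to ensure that conditioning on $(u_2,\dots,u_n)$ is a sensible operation, though the argument also works for $n=1$ directly.
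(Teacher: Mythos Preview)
Your proof is correct and follows essentially the same approach as the paper: both arguments isolate one coordinate of each $u_i$ (using that the corresponding entry of $s$ is a unit mod $p$) and observe that, given the remaining coordinates, there is a unique solution. The only cosmetic difference is that you factor the joint probability over $i$ first using independence and then handle a single index, whereas the paper keeps the ``for all $1\le i\le r$'' throughout and passes directly to $1/p^r$.
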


\begin{proof}
We have 
\begin{align*}
    \Pr(s \cdot u_i = z_i \mod p \text{ for all } 1 \le i \le r) = \\
    \Pr(s_j u_{i,j} = z_i-\sum_{k \neq j} s_k u_{i,k} \mod p \text{ for all } 1 \le i \le r) = \\
    \Pr(u_{i,j} = (z_i-\sum_{k \neq j} s_k u_{i,k}) s_j^{-1} \mod p \text{ for all } 1 \le i \le r) = \\
    \frac{1}{p^r}.
\end{align*}
\end{proof}

\begin{lem} \label{newlem}
Fix $n \ge 2$, and let $u_1, \dots, u_r$ each be chosen uniformly at random from $\left\{0, \dots, p-1\right\}^n$. For any $s, t \in \left\{1, \dots, p-1\right\}^n$ \emph{that are not multiples of each other mod $p$} and for any $z_1, \dots, z_r \in \left\{0, \dots, p-1\right\}$, we have \[\Pr(t \cdot u_i = z_i \mod p \text{ for all } 1 \le i \le r \text{     } | \text{     } s \cdot u_i = z_i \mod p \text{ for all } 1 \le i \le r) = \frac{1}{p^r}.\]
\end{lem}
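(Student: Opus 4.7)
The plan is to reduce the conditional probability to a per-coordinate statement using independence, then handle that statement either by a linear-algebra argument or by mimicking the elimination technique used in Lemma~\ref{long1g}. Because $u_1, \dots, u_r$ are mutually independent, the joint event $\{s \cdot u_i \equiv z_i, \, t \cdot u_i \equiv z_i \text{ for all } i\}$ factors into independent single-index events. Combined with Lemma~\ref{long1g}, which gives $\Pr(s \cdot u_i \equiv z_i \text{ for all } i) = 1/p^r$, the target conditional probability $1/p^r$ is equivalent to showing $\Pr(s \cdot u \equiv z \text{ and } t \cdot u \equiv z) = 1/p^2$ for a single uniform $u \in \{0,\dots,p-1\}^n$ and arbitrary $z$.

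For that single-coordinate statement I would follow the style of Lemma~\ref{long1g}. Since $s \in \{1,\dots,p-1\}^n$, the entry $s_1$ is invertible mod $p$, so the condition $s \cdot u \equiv z$ determines $u_1$ uniquely as an affine function of $u_2, \dots, u_n$. Substituting this into $t \cdot u$ gives
\[
t \cdot u \equiv t_1 s_1^{-1} z + \sum_{j=2}^n \bigl(t_j - t_1 s_1^{-1} s_j\bigr) u_j \pmod p.
\]
Conditional on $s \cdot u \equiv z$, the variables $u_2, \dots, u_n$ remain independent and uniform on $\{0, \dots, p-1\}$, so $t \cdot u$ is uniform mod $p$ as soon as at least one coefficient $t_j - t_1 s_1^{-1} s_j$ is nonzero. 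That gives $\Pr(t \cdot u \equiv z \mid s \cdot u \equiv z) = 1/p$, and multiplying across the $r$ independent copies yields the stated $1/p^r$.

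The main obstacle, and the only place the hypothesis that $s$ and $t$ are not multiples of each other mod $p$ is used, is verifying that some reduced coefficient $t_j - t_1 s_1^{-1} s_j$ is nonzero. If they all vanished, then $t_j \equiv (t_1 s_1^{-1}) s_j \pmod p$ for every $j$, meaning $t \equiv (t_1 s_1^{-1}) s \pmod p$, contradicting the hypothesis. So this step is short but is the crux: it is where the non-proportionality of $s$ and $t$ translates into genuine additional randomness of $t \cdot u$ beyond what is already pinned down by $s \cdot u$. Once this is observed, the rest of the argument is a direct computation paralleling the proof of Lemma~\ref{long1g}.
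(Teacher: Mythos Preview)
Your proof is correct. It differs from the paper's argument in presentation, though the underlying linear algebra is the same. The paper keeps all $r$ copies together: it writes the joint event as a linear system with $2r$ equations in $rn$ unknowns, forms a $2r\times(rn+1)$ augmented matrix, row-reduces, and counts $p^{(n-2)r}$ solutions to get the joint probability $p^{-2r}$ and hence the conditional probability $p^{-r}$. You instead exploit the independence of $u_1,\dots,u_r$ to reduce to a single copy, and then carry out one explicit elimination step (solving $s\cdot u\equiv z$ for $u_1$) rather than appealing to abstract row reduction. Your route is a bit cleaner: factoring across the $r$ independent coordinates avoids the block matrix entirely, and the by-hand elimination makes transparent exactly where the non-proportionality hypothesis enters (namely, that some reduced coefficient $t_j - t_1 s_1^{-1} s_j$ survives). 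The paper's version has the advantage of being stated uniformly without singling out a coordinate, but both arguments amount to the observation that $s$ and $t$ are linearly independent over $\mathbb{F}_p$, so the two linear constraints cut the solution space down by a factor of $p^2$.
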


\begin{proof}
By Lemma \ref{long1g} and the definition of conditional probability, we have

\begin{align*} 
\Pr(t \cdot u_i = z_i \mod p \text{ for all } 1 \le i \le r \text{     } | \text{     } s \cdot u_i = z_i \mod p \text{ for all } 1 \le i \le r) = \\
\frac{\Pr(t \cdot u_i = z_i \mod p \text{ for all } 1 \le i \le r \text{     } \wedge \text{     } s \cdot u_i = z_i \mod p \text{ for all } 1 \le i \le r)}{\Pr(s \cdot u_i = z_i \mod p \text{ for all } 1 \le i \le r)} = \\
p^r \Pr(t \cdot u_i = z_i \mod p \text{ for all } 1 \le i \le r \text{     } \wedge \text{     } s \cdot u_i = z_i \mod p \text{ for all } 1 \le i \le r). 
\end{align*}

Moreover 

\small
\begin{align*}\Pr(t \cdot u_i = z_i \mod p \text{ for all } 1 \le i \le r \text{     } \wedge \text{     } s \cdot u_i = z_i \mod p \text{ for all } 1 \le i \le r) = \\
\frac{|\left\{(u_1, \dots, u_r): \text{     } t \cdot u_i = z_i \mod p \text{     } \wedge \text{     } s \cdot u_i = z_i \mod p \text{     } \wedge \text{     } u_i \in \left\{0, \dots, p-1 \right\}^n \text{ for all } 1 \le i \le r \right\}|}{p^{nr}}.
\end{align*} \normalsize

In order to calculate \small \[|\left\{(u_1, \dots, u_r): \text{     } t \cdot u_i = z_i \mod p \text{     } \wedge \text{     } s \cdot u_i = z_i \mod p \text{     } \wedge \text{     } u_i \in \left\{0, \dots, p-1 \right\}^n \text{ for all } 1 \le i \le r \right\}|,\] \normalsize we must find the number of solutions $(u_1, \dots, u_r)$ to the system of equations $t \cdot u_i = z_i \mod p$ and $s \cdot u_i = z_i \mod p$ for all $1 \le i \le r$.

We form an augmented matrix $M$ with $2r$ rows and $r n + 1$ columns from this system of equations. From left to right, the entries of row $i$ are $(i-1)n$ zeroes, then $s$, then $(r-i)n$ zeroes, then $z_i$ for each $1 \le i \le r$. The entries of row $r+i$ are $(i-1)n$ zeroes, then $t$, then $(r-i)n$ zeroes, then $z_i$ for each $1 \le i \le r$.   

We row-reduce $M$. Since $s$ and $t$ are not multiples of each other mod $p$, $M$ has $2r$ pivot entries. Therefore the system of equations has $2r$ dependent variables and $(n-2)r$ independent variables. There are $p$ choices for each of the independent variables, and the dependent variables are determined by the values of the independent variables, so there are $p^{(n-2)r}$ solutions to the system of equations. Thus 
 \[\Pr(t \cdot u_i = z_i \mod p \text{ for all } 1 \le i \le r \text{     } | \text{     } s \cdot u_i = z_i \mod p \text{ for all } 1 \le i \le r) = \frac{p^r}{p^{2r}} = \frac{1}{p^r}.\]
\end{proof}

\begin{lem}\label{lemma:cartrmodp}
For any subset $S\subset\{1,\dots,p-1\}^n$, there exists $u = (u_1, \dots, u_r)$ with $u_1, \dots, u_r \in \{0, \dots, p-1\}^n$ such that for all $z = (z_1, \dots, z_r) \in \{0, \dots, p-1\}^r$, \[\left\lvert\{x \in S : x \cdot u_i\equiv z_i \pmod p \text{ for all } 1 \le i \le r\}\right\rvert\leq\frac{|S|}{p^r} + 2\sqrt{|S|}.\]
\end{lem}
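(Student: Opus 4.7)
My plan is to apply the probabilistic method with Chebyshev's inequality on the natural counting random variable. Sample $u = (u_1, \dots, u_r)$ with each $u_i$ chosen independently and uniformly at random from $\{0, \dots, p-1\}^n$. For each $z \in \{0, \dots, p-1\}^r$, let $N_z(u)$ denote the cardinality appearing on the left-hand side of the lemma, and write $N_z = \sum_{x \in S} X_x$ where $X_x$ is the indicator of the event that $x \cdot u_i \equiv z_i \pmod{p}$ for all $i$. Lemma~\ref{long1g} gives $\mathbb{E}[N_z] = |S|/p^r$ for every $z$.

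The heart of the argument is bounding $\mathrm{Var}(N_z)$. For any pair $(x, y)$ in $S$ with $x, y$ not scalar multiples of each other mod $p$, Lemma~\ref{newlem} combined with Lemma~\ref{long1g} yields $\Pr(X_x = X_y = 1) = 1/p^{2r}$, so the covariance vanishes. For pairs with $y \equiv c x \pmod{p}$ where $c \in \{2, \dots, p-1\}$, the event $X_x = X_y = 1$ forces $(c-1)z \equiv 0 \pmod{p}$, which is impossible whenever $z \ne 0$; hence for $z \ne 0$ every such off-diagonal term is nonpositive, yielding $\mathrm{Var}(N_z) \le |S|/p^r$.

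The case $z = 0$ is the main obstacle, and is where the hypothesis $S \subset \{1, \dots, p-1\}^n$ becomes essential. For $y \equiv cx \pmod{p}$ with $c \ne 1$, Lemma~\ref{long1g} gives $\Pr(X_x = X_y = 1) = \Pr(x \cdot u_i \equiv 0 \text{ for all } i) = 1/p^r$, and each $x$ has at most $p - 2$ such partners in $S$: every nonzero scalar multiple of $x$ still lies in $\{1, \dots, p-1\}^n$ (since $p$ is prime and every coordinate of $x$ is nonzero), so there are exactly $p-1$ scalar multiples of $x$ mod $p$ and we exclude $x$ itself. This gives $\mathrm{Var}(N_0) \le (p-1)|S|/p^r$.

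Finally, Chebyshev's inequality yields $\Pr(N_z \ge |S|/p^r + 2\sqrt{|S|}) \le \mathrm{Var}(N_z)/(4|S|)$, and a union bound over the $p^r$ possible values of $z$ gives a total failure probability of at most
\[
\frac{(p^r - 1) \cdot 1 + (p - 1)}{4 p^r} = \frac{p^r + p - 2}{4 p^r} < 1,
\]
so some $u$ works simultaneously for all $z$, as required.
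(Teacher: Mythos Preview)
Your proof is correct and follows essentially the same approach as the paper's: the probabilistic method via indicator variables, the same covariance analysis splitting into non-multiple pairs (covariance zero by Lemmas~\ref{long1g} and~\ref{newlem}) versus scalar-multiple pairs, the same case distinction between $z \neq 0$ and $z = 0$, and Chebyshev plus a union bound. The only cosmetic differences are that the paper bounds $\mathrm{Var}(N_0)$ by $|S|/p^{r-1}$ rather than your slightly sharper $(p-1)|S|/p^r$, and the paper pushes the union bound to show the success probability exceeds $1/2$ rather than merely $0$; neither affects the argument.
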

\begin{proof}
Suppose that $S$ is any subset of $\{1, \dots, p-1\}^n$, and let $u_1, \dots, u_r$ each be chosen uniformly at random from $\{0, \dots, p-1\}^n$. For each $z \in \{0, \dots, p-1\}^r$, let $T_z$ be the set of $x \in S$ for which $x \cdot u_i = z_i$ for all $i$. By linearity of expectation, we have $\EE(|T_z|) = \frac{|S|}{p^r}$ for all $z$.

Consider an arbitrary $z \in \{0, \dots, p-1\}^r$. For each $s \in S$, define the indicator random variable $X_{s,z}$ such that $X_{s,z}=1$ if $s \cdot u_i = z_i$ for all $1 \le i \le r$, and $X_{s,z} = 0$ otherwise. If $s,t \in S$ are not multiples of each other mod $p$, then $\mathrm{Cov}(X_{s,z}, X_{t,z}) = 0$ by Lemmas \ref{long1g} and \ref{newlem}. If $s$ and $t$ are multiples of each other with $s \neq t$, then $\mathrm{Cov}(X_{s,z},X_{t,z}) = \EE(X_{s,z}X_{t,z}) - \EE(X_{s,z})\EE(X_{t,z})$. 

If $z$ contains any nonzero $z_i$, then $\EE(X_{s,z}X_{t,z}) = 0$, giving $\mathrm{Cov}(X_{s,z},X_{t,z}) = -\frac{1}{p^{2r}}$. Thus,
\begin{align*}
    \mathrm{Var}(|T_z|) &= \mathrm{Var}\left(\sum_{s \in S} X_{s,z}\right) = \sum_{s \in S} \mathrm{Var}(X_{s,z}) + \sum_{s \neq t} \mathrm{Cov}(X_{s,z}, X_{t,z})\\
    &\leq \sum_{s \in S} \mathrm{Var}(X_{s,z}) = |S|\left(\frac{1}{p^r} - \frac{1}{p^{2r}}\right) < \frac{|S|}{p^r}.
\end{align*}
By Chebyshev's inequality, $\PP\left(|T_z| \geq \frac{|S|}{p^r} + 2\sqrt{|S|}\right) \leq \frac{1}{4p^r}$.

Otherwise, $\EE(X_{s,z}X_{t,z}) = \frac{1}{p^r}$, giving $\mathrm{Cov}(X_{s,z},X_{t,z}) = \frac{1}{p^r} - \frac{1}{p^{2r}} < \frac{1}{p^r}$. Note that there are at most $(p-2)|S|$ ordered pairs $(s,t)$ for which $s$ and $t$ are multiples of each other $\pmod p$ with $s \neq t$. Thus,
\begin{align*}
    \mathrm{Var}(|T_z|) &= \mathrm{Var}\left(\sum_{s \in S} X_{s,z}\right) = \sum_{s \in S} \mathrm{Var}(X_{s,z}) + \sum_{s \neq t} \mathrm{Cov}(X_{s,z}, X_{t,z})\\
    &\leq \sum_{s \in S} \mathrm{Var}(X_{s,z}) + \frac{(p-2)|S|}{p^r} < \frac{|S|}{p^r} + \frac{(p-2)|S|}{p^r} < \frac{|S|}{p^{r-1}}.
\end{align*}
By Chebyshev's inequality, $\PP\left(|T_z| \geq \frac{|S|}{p^r} + 2\sqrt{|S|}\right) \leq \frac{1}{4p^{r-1}}$.

By the union bound, \[\PP\left(\forall z : |T_z| \leq \frac{|S|}{p^r} + 2\sqrt{|S|}\right) \geq 1 - \frac{p^r-1}{4p^r} - \frac{1}{4p^{r-1}} = \frac{3p^r-p+1}{4p^r} > \frac{1}{2}.\] Thus, the conditions are satisfied with probability greater than $\frac{1}{2}$ when $u$ is chosen randomly, so there must always exist $u$ satisfying the conditions.
\end{proof}

Next we prove the lower bound on the maximum possible multiplicative gap between $\opt_{\operatorname{weak}}(\cart_r(F))$ and $\opt_{\operatorname{std}}(F)$.

\begin{thm}\label{theorem:cartrstdlower}
For all $M > 2r$ and infinitely many $k$, there exists a set $F$ of functions from a set $X$ to a set $Y$ with $|Y|=k$ such that $\opt_{\operatorname{std}}(F) = M$ and \[\opt_{\operatorname{weak}}(\cart_r(F)) \geq (1-o(1))\left(|Y|^r \ln{|Y|}\right)(\opt_{\operatorname{std}}(F)-2r).\]
\end{thm}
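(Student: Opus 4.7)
The plan is to generalize Long's lower bound construction from the $r=1$ bandit-vs-standard comparison to the $\cart_r$ weak setting, substituting Lemma~\ref{lemma:cartrmodp} for Long's single-input uniformity lemma. Fix a prime $p$, to be sent to infinity so that $k=|Y|=p$ runs through infinitely many values; set $n=M$, take $X=\{0,\dots,p-1\}^n$, $Y=\{0,\dots,p-1\}$, and let $F=\{f_s : s\in\{1,\dots,p-1\}^n\}$ with $f_s(x)=s\cdot x\bmod p$, so that $|F|=(p-1)^n$.

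First I would check that $\opt_{\operatorname{std}}(F)=n=M$. For the upper bound, a learner who maintains the affine subspace of $\FF_p^n$ consistent with the labels seen so far, and guesses the unique value predicted whenever the linear map $s\mapsto s\cdot x$ is constant on that subspace, makes a mistake only when this map is non-constant; each such mistake then drops the dimension of the subspace by one, giving at most $n$ mistakes. For the matching lower bound, the adversary presents $e_1,\dots,e_n$ in order, and at round $i$ still has $\ge p-1\ge 2$ choices for the $i$th coordinate of a consistent target, so it can always pick one disagreeing with the learner's guess.

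For the lower bound on $\opt_{\operatorname{weak}}(\cart_r(F))$, the adversary maintains the still-consistent set $S\subseteq\{1,\dots,p-1\}^n$. Each round, it invokes Lemma~\ref{lemma:cartrmodp} on the current $S$ to obtain inputs $(u_1,\dots,u_r)$ for which every possible answer tuple $(z_1,\dots,z_r)\in Y^r$ is consistent with at most $|S|/p^r+2\sqrt{|S|}$ elements of $S$. The adversary plays those inputs and, after seeing the learner's guesses, responds \emph{no}; this is truthful because the set of $s\in S$ disagreeing with the learner on some coordinate is nonempty once $|S|$ exceeds a small absolute constant. The adversary then replaces $S$ by that disagreement set, whose size is at least $|S|(1-1/p^r)-2\sqrt{|S|}$.

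The main obstacle is absorbing the additive $2\sqrt{|S|}$ term from Lemma~\ref{lemma:cartrmodp}. The idea is that whenever $|S|\ge C p^{2r}$ for a suitable absolute constant $C$, the inequality $2\sqrt{|S|}\le \veps|S|/p^r$ holds, and so each round multiplies $|S|$ by at most $1-(1-\veps)/p^r$. Iterating from $|S|=(p-1)^n$ down to $|S|\approx p^{2r}$ therefore forces at least
\[
(1-o(1))\log_{p^r/(p^r-1)}\!\frac{(p-1)^n}{Cp^{2r}} = (1-o(1))\, p^r\ln(p)\,(n-2r)
\]
mistakes as $p\to\infty$, which is exactly $(1-o(1))\,|Y|^r\ln|Y|\,(\opt_{\operatorname{std}}(F)-2r)$. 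The $-2r$ in the theorem statement is an honest artifact of the $p^{2r}$ threshold at which the variance term in Lemma~\ref{lemma:cartrmodp} starts to dominate the multiplicative decay.
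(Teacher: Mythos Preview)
Your proposal is correct and follows essentially the same route as the paper: both use the linear family $f_s(x)=s\cdot x\bmod p$, invoke Lemma~\ref{lemma:cartrmodp} each round to find an input tuple for which no answer is consistent with more than $|S|/p^r+2\sqrt{|S|}$ remaining targets, answer \textit{no}, and iterate the resulting multiplicative decay down to the $p^{2r}$ threshold to extract $(1-o(1))p^r\ln(p)(n-2r)$ mistakes. The only cosmetic differences are that you take $F$ itself to be the restricted family $\{f_s:s\in\{1,\dots,p-1\}^n\}$ and supply your own proof that $\opt_{\operatorname{std}}(F)=n$, whereas the paper uses the full $F_L(p,n)=\{f_a:a\in\{0,\dots,p-1\}^n\}$, cites the literature for $\opt_{\operatorname{std}}(F_L(p,n))=n$, and has the adversary track only the nonzero-coordinate subset.
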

\begin{proof}
Fix $n \geq 3$ and $p \geq 5$. For all $a \in \{0,\dots,p-1\}^n$, we define $f_a : \{0,\dots,p-1\}^n \rightarrow \{0,\dots,p-1\}$ so that $f_a(x) = a \cdot x \pmod p$ and define $F_L(p,n) = \{f_a:a \in \{0,\dots,p-1\}^n\}$. It is known that $\opt_{\operatorname{std}}(F_L(p,n)) = n$ for all primes $p$ and $n > 0$ \cite{s-lmlpe-88,Auer1995,Blum1998,LONG2020159}.

We now determine a bound on $\opt_{\operatorname{weak}}(\cart_r(F_L(p,n)))$. Let $S = \{1,\dots,p-1\}^n$, so $|S| = (p-1)^n$. Let $R_1 = \{f_a : a \in S\} \subset F_L(p,n)$. In each round $t > 1$, the adversary will create a list $R_t$ of members of $\{f_a:a \in S\}$ that are consistent with its previous answers. They will always answer \textit{no} and choose $(x_1,\dots,x_r)$ that minimizes \[\max_{(y_1,\dots,y_r)}\left|R_t \cap \{f: f(x_i) = y_i \text{ for all } 1 \le i \le r\}\right|.\]

By Lemma~\ref{lemma:cartrmodp}, we have \[R_{t+1} \geq |R_t| - \frac{|R_T|}{p^r} - 2\sqrt{|R_t|} \geq |R_t| - \frac{|R_t|}{p^r} - \frac{2|R_t|}{p^r\sqrt{\ln{p}}} = \left(1-\frac{1+\frac{2}{\sqrt{\ln{p}}}}{p^r}\right) |R_t|\] as long as $|R_t| \geq p^{2r}\ln{p}$. Thus, we have $|R_t| \geq \left(1-\frac{1+\frac{2}{\sqrt{\ln{p}}}}{p^r}\right)^{t-1}(p-1)^n$. Therefore, whenever $\left(1-\frac{1+\frac{2}{\sqrt{\ln{p}}}}{p^r}\right)^{t-1}(p-1)^n \geq p^{2r} \ln{p}$, the adversary can guarantee $t$ wrong guesses. This is true for $t = (1-o(1))n p^r\ln{p}$, which gives the desired result.
\end{proof}

\begin{remark}
    Since we have the trivial inequality $\opt_{\operatorname{amb},r}(F)\geq\opt_{\operatorname{weak}}(\cart_r(F))$ because the learner has strictly more information in the scenario on the right hand side, we also have $\opt_{\operatorname{amb},r}(F) \geq (1-o(1))\left(|Y|^r \ln{|Y|}\right)(\opt_{\operatorname{std}}(F)-2r)$ for the families $F = F_L(p,n)$.
\end{remark}

Next we establish a similar upper bound relating $\opt_{\operatorname{weak}}(\cart_r(F))$ and $\opt_{\operatorname{std}}(F)$. For this bound, we use the fact that for all sets $F$ of functions $f:X\to Y$, we have $\opt_{\operatorname{std}}(\cart_r(F))=\opt_{\operatorname{std}}(F)$. We also use the bound $\opt_{\operatorname{weak}}(F)\leq(1+o(1))(|Y|\ln|Y|)\opt_{\operatorname{std}}(F)$ which was proved in \cite{LONG2020159}.

\begin{thm}\label{theorem:cartrstdupper}
For any set $F$ of functions from some set $X$ to $\{0,1,\dots,k-1\}$ and for any $r \geq 1$, \[\opt_{\operatorname{weak}}(\cart_r(F)) \leq (1+o(1))\left(|Y|^r r\ln{|Y|}\right)\opt_{\operatorname{std}}(F).\]
\end{thm}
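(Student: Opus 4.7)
The plan is to reduce this to the known bound $\opt_{\operatorname{weak}}(G) \le (1+o(1))(|Z|\ln|Z|)\opt_{\operatorname{std}}(G)$ from \cite{LONG2020159}, applied to the family $G = \cart_r(F)$ viewed as functions from $X^r$ to $Y^r$. Once we know that $\opt_{\operatorname{std}}(\cart_r(F)) = \opt_{\operatorname{std}}(F)$, the result follows by plugging in $|Z| = |Y|^r$, since then $\ln|Z| = r\ln|Y|$, giving exactly the claimed bound
\[\opt_{\operatorname{weak}}(\cart_r(F)) \le (1+o(1))\bigl(|Y|^r \cdot r\ln|Y|\bigr)\opt_{\operatorname{std}}(\cart_r(F)) = (1+o(1))\bigl(|Y|^r \cdot r\ln|Y|\bigr)\opt_{\operatorname{std}}(F).\]

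The only substantive step, therefore, is to justify the equality $\opt_{\operatorname{std}}(\cart_r(F)) = \opt_{\operatorname{std}}(F)$. For the direction $\opt_{\operatorname{std}}(\cart_r(F)) \le \opt_{\operatorname{std}}(F)$, I would take an optimal learner $\mathcal{L}$ for the standard model on $F$ and build a learner $\mathcal{L}'$ for the standard model on $\cart_r(F)$ as follows: when the adversary presents an $r$-tuple $(x_1,\dots,x_r)$, the learner $\mathcal{L}'$ feeds $x_1,\dots,x_r$ into $\mathcal{L}$ one at a time, records the predicted labels $\hat y_1,\dots,\hat y_r$, outputs the tuple $(\hat y_1,\dots,\hat y_r)$, and after receiving the strong feedback $(y_1,\dots,y_r)$ updates $\mathcal{L}$ with each of the $r$ revealed answers in sequence. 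A mistake in the $\cart_r(F)$ game means some $\hat y_i \ne y_i$, which charges at least one mistake to $\mathcal{L}$, so $\mathcal{L}'$ makes at most $\opt_{\operatorname{std}}(F)$ mistakes.

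For the reverse direction $\opt_{\operatorname{std}}(\cart_r(F)) \ge \opt_{\operatorname{std}}(F)$, I would have the adversary in the $\cart_r(F)$ game simulate the optimal adversary against $F$ by always presenting constant $r$-tuples $(x,x,\dots,x)$; a learner's output $(\hat y_1,\dots,\hat y_r)$ is wrong on this round precisely when some $\hat y_i$ disagrees with $f(x)$, which for the adversary's purposes is no easier than forcing a single wrong guess on $x$ in the $F$ game. Combining the two inequalities gives the equality, and plugging into Long's bound completes the proof.

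The main obstacle here is essentially only bookkeeping: being careful that the reduction using constant tuples in the lower-direction correctly models a single round of the $F$-game, and that the $\mathcal{L}'$ simulation in the upper-direction does not undercount mistakes when several coordinates are wrong simultaneously (it does not, since each wrong coordinate is a distinct mistake for $\mathcal{L}$). After that, the invocation of Long's theorem is immediate with the output alphabet replaced by $Y^r$.
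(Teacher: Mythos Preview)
Your proposal is correct and follows exactly the paper's approach: apply Long's bound to $\cart_r(F)$ with output alphabet $Y^r$ and then invoke $\opt_{\operatorname{std}}(\cart_r(F))=\opt_{\operatorname{std}}(F)$. The only difference is that you actually justify this last equality, whereas the paper simply states it as a known fact before the theorem.
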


\begin{proof}
Substituting $\cart_r(F)$ for $F$ (and therefore setting $Y^r$ in place of $Y$) in the upper bound from \cite{LONG2020159} and using the fact that $\opt_{\operatorname{std}}(\cart_r(F))=\opt_{\operatorname{std}}(F)$, we get
\begin{align*}
\opt_{\operatorname{weak}}(\cart_r(F))&\leq(1+o(1))(|Y|^r\ln\left(|Y|^r\right))\opt_{\operatorname{std}}(\cart_r(F))\\
&=(1+o(1))(|Y|^r r\ln|Y|)\opt_{\operatorname{std}}(F).
\end{align*}
\end{proof}

\begin{remark}
    Theorem~\ref{theorem:cartrstdlower} demonstrates that the upper bound in Theorem~\ref{theorem:cartrstdupper} is sharp up to a factor of $r(1+o(1))$.
\end{remark}

Next we prove an upper bound for $\opt_{\operatorname{amb,r}}(F)$ using a method from \cite{LONG2020159}. Like the last bound, this one is sharp up to a factor of $r(1+o(1))$.

\begin{thm}\label{theorem:ambrlongupper}
For any set $F$ of functions from some set $X$ to $\{0,1,\dots,k-1\}$ and for any $r \geq 1$, \[\opt_{\operatorname{amb,r}}(F) \leq (1+o(1))\left(|Y|^r r\ln{|Y|}\right)\opt_{\operatorname{std}}(F).\]
\end{thm}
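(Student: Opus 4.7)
The plan is to apply Long's method from~\cite{LONG2020159} to the ambiguous model, paralleling the parameter substitution used in Theorem~\ref{theorem:cartrstdupper}. Long's bound for the weak reinforcement model on a class $F'$ with label set $Y'$ gives $(1+o(1))(|Y'|\ln|Y'|)\opt_{\operatorname{std}}(F')$ mistakes; with $F' = \cart_r(F)$ and $Y' = Y^r$, using $\opt_{\operatorname{std}}(\cart_r(F))=\opt_{\operatorname{std}}(F)$ and $\ln|Y^r| = r\ln|Y|$, the target bound $(1+o(1))(|Y|^r r\ln|Y|)\opt_{\operatorname{std}}(F)$ is precisely what Long gives for $\cart_r(F)$. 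Since the ambiguous adversary is strictly stronger than the weak-reinforcement adversary on $\cart_r(F)$ (and in fact $\opt_{\operatorname{amb},r}(F)\geq\opt_{\operatorname{weak}}(\cart_r(F))$), we cannot invoke Theorem~\ref{theorem:cartrstdupper} directly; instead, it suffices to show that Long's weak-reinforcement learner for $\cart_r(F)$ admits a \emph{causal} implementation legal in the ambiguous setting.

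Concretely, the learner maintains weights $w_f$ over $f\in F$ updated exactly as in Long's algorithm applied to $\cart_r(F)$. Within each round, upon seeing input $x_i$ after having committed to within-round guesses $y_1,\dots,y_{i-1}$, the learner conditions the current weights on the event $\{f : f(x_j)=y_j \text{ for all } j<i\}$ and then applies Long's one-step prediction rule on input $x_i$ to pick $y_i$. Because $y_i$ depends only on $(x_1,\dots,x_i,y_1,\dots,y_{i-1})$ and the weights inherited from past rounds, the causal constraint of the ambiguous model is satisfied. At the end of the round, the yes/no feedback triggers Long's weight-update rule applied to the round's full input--guess tuple, exactly as in the weak-reinforcement analysis on $\cart_r(F)$.

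The main obstacle is verifying that Long's potential-based mistake bound still goes through under this causal implementation against an ambiguous adversary who may choose $x_{i+1}$ adaptively based on $y_i$. Long's analysis is an adversarial online guarantee, so adaptivity of the input stream is handled automatically; the delicate point is to check that the causal decomposition preserves Long's per-round weight contraction, which should follow because the per-component conditional rule is consistent with marginalizing Long's joint predictive distribution on $Y^r$ and hence achieves at least as large a weight reduction on a ``no'' response as the joint rule would on $\cart_r(F)$. Once this verification is complete, the substitution above yields $\opt_{\operatorname{amb},r}(F)\leq (1+o(1))(|Y|^r r\ln|Y|)\opt_{\operatorname{std}}(F)$, as desired.
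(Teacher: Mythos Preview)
Your approach is correct and coincides with the paper's: both run Long's cloning/weighted-vote learner over copies of an optimal standard-model algorithm $A_s$, with $k^r$ playing the role of the label-set size in the potential analysis, and your explicit causal rule (choose $y_i$ by weighted majority among the copies already consistent with $y_1,\dots,y_{i-1}$) is exactly what is needed to ensure at least a $1/k^r$ fraction of the weight agrees with the committed $r$-tuple---a detail the paper's write-up leaves implicit. One caution on wording: Long's weights live on \emph{copies of $A_s$}, not on functions $f\in F$, so your notation ``$w_f$ over $f\in F$'' should be adjusted, since weighting functions directly would only yield a bound in terms of $\ln|F|$ rather than $\opt_{\operatorname{std}}(F)$.
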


\begin{proof}
Consider an algorithm $B$ for the $r$-input delayed, ambiguous reinforcement model which uses a worst-case optimal algorithm $A_s$ for the standard model and maintains copies of $A_s$ which are given different inputs and answers. In each round, $B$ chooses a prediction by taking a weighted vote over the predictions of the copies.

Fix $\alpha = \frac{1}{k^r \ln{k}}$. Each copy of $A_s$ gets a weight, where the current weight is $\alpha^x$ if the copy has contributed to $B$ making $x$ mistakes in the earlier weighted votes. $B$ uses these weights to make a prediction for the current round by taking a weighted vote over the predictions of each copy for the outputs of all $r$ inputs. In case of ties, the winner is chosen uniformly at random among the predictions that tied for the highest weight.

At the beginning, $B$ starts with one copy of $A_s$. Whenever it gets a wrong answer for the outputs of the $r$ inputs in the current round, any copy of $A_s$ which predicted an answer that did not win the weighted vote is rewound as if the round did not happen, so they forget the input and their answer. The copies of $A_s$ that predicted the wrong answer which won the weighted vote are cloned to make $k^r-1$ copies, and each copy is given a different answer for the outputs of the $r$ inputs, which differs from the wrong answer which won the weighted vote.

If $W_t$ is the total weight of the copies of $A_s$ before round $t$, we must have $W_t \ge \alpha^{\opt_{\operatorname{std}}(F)}$ since one copy of $A_s$ always gets the correct answers. Moreover, if $B$ makes a mistake in round $t$, then copies of $A_s$ with total weight at least $\frac{W_t}{k^r}$ are cloned to make $k^r-1$ copies which each have weight $\alpha$ times their old weight. This implies that $W_{t+1} \le (1-\frac{1}{k^r})W_t + \frac{1}{k^r}(\alpha(k^r-1)W_t) < (1-\frac{1}{k^r})W_t + \alpha W_t$.

Thus after $B$ has made $x$ mistakes, we have $W_t < (1-\frac{1}{k^r}+\alpha)^x < e^{-(\frac{1}{k^r}-\alpha)x}$. Therefore $e^{-(\frac{1}{k^r}-\alpha)x} > \alpha^{\opt_{\operatorname{std}}(F)}$, so $x < \frac{\ln(\frac{1}{\alpha})\opt_{\operatorname{std}}(F)}{\frac{1}{k^r}-\alpha}$, which implies the desired bound.
\end{proof}

\section{New Models for Permutation Functions}\label{section:permutation}

In this section, we define and explore new models where the family of possible functions $F$ is a set of permutations of length $n$ and where the learner tries to guess information about the relative orders of inputs.

\subsection{The Order Model}

We first define a new variant model called the order model.
\begin{dfn}
In the \emph{order model}, for a set $F$ of permutations of $n$ numbers, the learner tries to guess a permutation function $f\in F.$ On each turn, the adversary chooses $r$ distinct inputs to $f$, i.e., a set $S\subseteq[n]$ with $\abs{S}=r$. The learner guesses the permutation of $\left\{1, \dots, r\right\}$ which is order-isomorphic to the outputs of $f$ on the given inputs.

Under weak reinforcement, the adversary informs the learner if they made a mistake, and under strong reinforcement, the adversary gives the correct answer to the learner. We denote the worst-case amount of mistakes for the learner with weak reinforcement as $\opt_{\operatorname{weak, <, r}}(F)$ and with strong reinforcement as $\opt_{\operatorname{strong, <, r}}(F).$ Note that $\opt_{\operatorname{strong, <, r}}(F)\le\opt_{\operatorname{weak, <, r}}(F).$ If $r=2,$ then strong and weak reinforcement are identical, so equality holds.
\end{dfn}

We first find an upper bound for the order model by presenting a strategy for the learner which is analogous to Theorem~\ref{theorem:ambrupper}.

\begin{thm}
\label{theorem:orderupper}
For $r>1,$ $\opt_{\operatorname{weak, <, r}}(F)<r!\ln\abs{F}.$
\end{thm}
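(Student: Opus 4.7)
The plan is to mirror the majority-vote learner strategy used in Theorems~\ref{theorem:cartrupper} and \ref{theorem:ambrupper}, with $r!$ playing the role that $r+1$ and $2^r$ played there. Let $F' \subseteq F$ denote the subfamily of permutations still consistent with all of the adversary's past \emph{no}-answers. When the adversary presents $r$ distinct inputs, each $g \in F'$ induces exactly one of at most $r!$ orderings on the outputs (since each $g$ is a permutation, the outputs are distinct, so the induced ordering is a genuine element of the symmetric group on $\{1, \dots, r\}$). The learner guesses the ordering that is induced by the largest number of functions in $F'$.

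By pigeonhole, the learner's guess is consistent with at least $|F'|/r!$ functions of $F'$. Hence, every time the adversary says \emph{no}, at least $|F'|/r!$ permutations are eliminated, and $|F'|$ shrinks by a factor of at most $\frac{r!-1}{r!}$. Since the true target permutation is always consistent with the adversary's answers, $|F'| \ge 1$ throughout, so the number $T$ of mistakes satisfies
\[
\left(\frac{r!-1}{r!}\right)^T |F| \ge 1,
\]
which rearranges to $T \le \log_{r!/(r!-1)}|F| = \ln(|F|)/\ln(1+\tfrac{1}{r!-1})$.

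To finish, I would bound the denominator from below by $\frac{1}{r!}$ using the standard inequality $\ln(1+x) > \frac{x}{1+x}$ for $x>0$ with $x=\tfrac{1}{r!-1}$, which gives $\ln(1+\tfrac{1}{r!-1}) > \tfrac{1}{r!}$ and yields $T < r! \ln|F|$, as desired. This is exactly the same arithmetic that appears at the end of Theorem~\ref{theorem:cartrupper}, just with $r+1$ replaced by $r!$.

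There is essentially no serious obstacle here; the only subtlety is verifying the count of $r!$ rather than something larger (the adversary could in principle specify $r$ inputs for which the number of induced orderings consistent with \emph{some} function in $F$ is smaller than $r!$, but $r!$ is always a safe upper bound because the inputs are distinct and each $f \in F$ is a permutation). The hypothesis $r > 1$ is used so that $r! \ge 2$ and the base $r!/(r!-1)$ of the logarithm is well-defined and greater than $1$.
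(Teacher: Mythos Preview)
Your proposal is correct and matches the paper's proof essentially verbatim: the learner guesses the plurality ordering, each \emph{no} eliminates at least a $\tfrac{1}{r!}$ fraction of the surviving permutations, and the bound $\log_{r!/(r!-1)}|F|<r!\ln|F|$ follows from the same logarithm estimate used in Theorem~\ref{theorem:cartrupper}. Your write-up is actually more detailed than the paper's, which compresses the argument to three sentences.
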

\begin{proof}
For each input that the adversary gives, the learner can pick the answer that corresponds to the most possible permutations. After each incorrect guess, at least an $\frac1{r!}$ fraction of the previously possible permutations get eliminated. Therefore, the number of incorrect guesses, and consequently the number of mistakes, is at most $\ln_{\frac{r!}{r!-1}}\abs{F}<r!\ln\abs{F}$.
\end{proof}

When $F = S_n$, Theorem \ref{theorem:orderupper} shows that $\opt_{\operatorname{weak, <, 2}}(S_n)<2n \ln{n}.$ We find a lower bound on $\opt_{\operatorname{weak, <}}(S_n)$ which is within a factor of $2+o(1)$ of the upper bound. In order to prove this bound, we define a function called $p(n)$ and prove a lemma about it.

\begin{dfn}
For $n\ge 1,$ we let $v_n:=\floor{\log_2 n}$ and define $p(n):=\displaystyle\sum_{1\le m\le n}v_m$.
\end{dfn}

\begin{lem}
For $n\ge 1,$ $p(n)=(n+1)v_n-2(2^{v_n}-1).$
\end{lem}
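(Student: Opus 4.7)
The plan is to prove the identity by grouping the terms of the sum $p(n) = \sum_{m=1}^n v_m$ according to the value of $v_m$. Since $v_m = \lfloor \log_2 m \rfloor = k$ precisely when $2^k \le m \le 2^{k+1}-1$, each value $k \in \{0,1,\dots,v_n-1\}$ is attained by exactly $2^k$ integers in $\{1,\dots,n\}$, while the top value $k = v_n$ is attained by the $n - 2^{v_n} + 1$ integers $m \in \{2^{v_n}, 2^{v_n}+1, \dots, n\}$. This yields the decomposition
\[
p(n) = \sum_{k=0}^{v_n-1} k \cdot 2^k + v_n\bigl(n - 2^{v_n} + 1\bigr).
\]

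Next, I would invoke the standard identity $\sum_{k=0}^{N-1} k \cdot 2^k = (N-2)2^N + 2$, which is easy to verify by a one-line induction on $N$ (or by differentiating the geometric series $\sum_{k=0}^{N-1} x^k$ at $x=2$). Applying this with $N = v_n$ gives
\[
p(n) = (v_n - 2) 2^{v_n} + 2 + v_n \cdot n - v_n \cdot 2^{v_n} + v_n = (n+1)v_n - 2\bigl(2^{v_n} - 1\bigr),
\]
where the terms $v_n \cdot 2^{v_n}$ cancel and the remaining $-2\cdot 2^{v_n} + 2$ combines to $-2(2^{v_n}-1)$. This is exactly the claimed closed form.

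A clean alternative is induction on $n$. The base case $n=1$ gives $p(1) = v_1 = 0 = 2\cdot 0 - 2(1-1)$. For the inductive step, one splits on whether $n+1$ is a power of $2$: if not, then $v_{n+1} = v_n$, so $p(n+1) = p(n) + v_n$ and the formula updates by adding exactly $v_n = v_{n+1}$; if $n+1 = 2^{v_{n+1}}$, then $v_{n+1} = v_n + 1$ and one uses this identity to substitute $n+1 = 2^{v_{n+1}}$ in the right-hand side, collapsing the extra terms neatly.

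There is no serious obstacle here; the only minor subtlety is keeping track of the boundary case where $n+1$ is a power of two (so that $v$ increments), but in both the direct decomposition and the inductive approach this is handled by a short algebraic check. I would present the direct grouping argument for brevity, since it avoids case analysis entirely.
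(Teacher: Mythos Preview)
Your proposal is correct. Your primary argument (grouping terms of the sum by the common value of $v_m$ and applying the closed form $\sum_{k=0}^{N-1} k\cdot 2^k = (N-2)2^N + 2$) is a genuinely different route from the paper's proof, which proceeds instead by induction on $n$ with a case split on whether $n$ is a power of two---exactly the ``clean alternative'' you sketch at the end. The direct grouping approach has the advantage of avoiding any case analysis and making the structure of the formula transparent (one sees immediately where the $-2(2^{v_n}-1)$ term comes from), at the minor cost of invoking an auxiliary summation identity. The paper's inductive proof is slightly more self-contained but requires checking the power-of-two boundary case separately. Both are short and complete; your write-up is fine as is.
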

\begin{proof}
We prove this by induction. The base case $n=1$ holds.

If $n$ is not a power of two, then $v_{n-1}=v_n,$ so $p(n)=p(n-1)+v_n=(nv_n-2(2^{v_n}-1))+v_n=(n+1)v_n-2(2^{v_n}-1)$, as desired.

Otherwise, $n=2^n$ is a power of two, so $v_{n-1}=v_n-1$ and $p(2^{v_n})=p(2^{v_n}-1)+v_n=(2^{v_n}(v_n-1)-2(2^{v_n-1}-1))+v_n=(2^{v_n}+1)v_n-2(2^{v_n}-1)$, as desired.
\end{proof}

Now we present a strategy resembling insertion sort for the adversary which achieves a lower bound of $p(n).$

\begin{thm} \label{theorem:perm2} Under the order model, the adversary can achieve $\opt_{\operatorname{weak, <, 2}}(S_n) \ge p(n)$ for all $n \ge 2$. \end{thm}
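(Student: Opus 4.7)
The plan is to have the adversary emulate binary insertion sort. The adversary processes the indices $m = 2, 3, \ldots, n$ in phases; all queries in phase $m$ will pair $m$ with some $j \in \{1, \ldots, m-1\}$, and the adversary will force $\lfloor \log_2 m \rfloor$ mistakes in each phase by binary-searching the position of $f(m)$ relative to the values $f(1), \ldots, f(m-1)$ already totally ordered by earlier phases. Summing over $m$ then yields $\sum_{m=1}^{n} \lfloor \log_2 m \rfloor = p(n)$ total forced mistakes.

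Within phase $m$, the adversary maintains a contiguous interval $I \subseteq \{0, 1, \ldots, m-1\}$ of positions that are still consistent for $f(m)$ relative to $f(1), \ldots, f(m-1)$; initially $|I| = m$. While $|I| \geq 2$, the adversary picks $j \in \{1, \ldots, m-1\}$ whose known rank splits $I$ into sub-intervals of sizes $\lceil |I|/2 \rceil$ and $\lfloor |I|/2 \rfloor$ -- such a $j$ exists because the $m-1$ elements placed in earlier phases realize all ranks $1, \ldots, m-1$ -- and queries the pair $(m, j)$. Whatever the learner guesses, the adversary answers \emph{no} and commits to the opposite inequality, shrinking $I$ to one of the two sub-intervals. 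Contiguity of $I$ is preserved inductively because every committed inequality has the form $f(m) < f(j)$ or $f(m) > f(j)$, which lops off a prefix or a suffix of $I$.

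A short induction on $k := |I|$ via the recurrence $T(k) = 1 + T(\lfloor k/2 \rfloor)$ with $T(1) = 0$ then gives the per-phase mistake count. The optimal learner, aiming to minimize future mistakes, guesses the inequality corresponding to the larger sub-interval of size $\lceil k/2 \rceil$, so after the forced \emph{no} the residual interval has size $\lfloor k/2 \rfloor$. The recurrence solves to $T(k) = \lfloor \log_2 k \rfloor$, so phase $m$ forces exactly $\lfloor \log_2 m \rfloor$ mistakes.

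The main obstacle I anticipate is verifying that the adversary's commitments across all phases remain jointly realizable by a single permutation of $[n]$, since a failure here would let the learner trap the adversary into yielding only \emph{yes}. In-phase feasibility is automatic because each balanced split keeps both halves non-empty. Cross-phase feasibility reduces to the observation that after phase $m$ the commitments determine a total order on $\{f(1), \ldots, f(m)\}$ (since $|I|$ has shrunk to $1$), and any total order on $[m]$ extends to a permutation of $[n]$ simply by inserting the remaining $n - m$ values into any unused positions. With this check in hand, the strategy is well-defined and the lower bound follows.
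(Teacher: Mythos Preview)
Your proposal is correct and follows essentially the same binary-insertion-sort strategy as the paper's proof: the adversary processes indices $2,\ldots,n$ in order and, in phase $m$, forces at least $\lfloor\log_2 m\rfloor$ mistakes by balanced binary search on the position of $f(m)$ among the already-ordered $f(1),\ldots,f(m-1)$, always answering \textit{no}. Your explicit discussion of cross-phase feasibility is actually more careful than the paper's (which absorbs it into an ``assume without loss of generality'' relabeling); the one cosmetic slip---writing that phase $m$ forces ``exactly $\lfloor\log_2 m\rfloor$'' mistakes rather than ``at least''---is harmless for the lower bound, since your analysis of the optimal learner already identifies the worst case for the adversary.
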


\begin{proof}
The adversary withholds any inquiries about $f(i)$ until the order of the smaller inputs $j<i$ is known. We use induction to show the desired bound. The base case of $n=2$ clearly holds, since $p(2) = 1$.

For $n>2,$ by the inductive hypothesis, the adversary can force the learner to make at least $p(n-1)$ mistakes without learning anything about $f(n).$ Assume without loss of generality that $f(1),\dots,f(n-1)$ are in increasing order.

The adversary then prolongs the learner from finding the position of $f(n)$ by making the learner do a binary search. Specifically, if at some point the learner's bounds on $f(n)$ are $a<f(n)\le b,$ the adversary asks about $(n,m)$ where $m$ is $\frac{a+b}2$ rounded to the nearest integer, and says \textit{no} to the learner's prediction. In this way, the adversary ensures that the number of possible values for $f(n)$ is at least $\floor{\frac{b-a}2}$ after the learner's guess. Since the number of possible values starts at $n,$ the adversary will be able to guarantee $v_n$ mistakes to find the value of $f(n).$ Thus, $\opt_{\operatorname{weak, <, 2}}(S_n)\ge \opt_{\operatorname{weak, <, 2}}(S_{n-1})+v_n\ge p(n-1)+v_n=p(n)$, as desired.
\end{proof}

We also get a lower bound on $\opt_{\operatorname{weak, <, r}}(S_n)$ for $r>2$ with a strategy that resembles merge sort. This lower bound is within a $(1+o(1))r \ln{r}$ factor of the upper bound $\opt_{\operatorname{weak, <, r}}(S_n)<(1-o(1))r! n \ln{n}$ which follows from Theorem \ref{theorem:orderupper}.

\begin{thm}\label{theorem:merge}
The adversary can achieve $\opt_{\operatorname{weak, <, r}}(S_n) \ge (1-o(1))(r-1)! n\log_r n$ as $n \to \infty$ and then $r \to \infty$.
\end{thm}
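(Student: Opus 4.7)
The plan is a recursive merge-sort-style strategy for the adversary. Let $T(n,r)$ denote the number of mistakes forced. Partition $[n]$ into $r$ equal blocks $B_1,\ldots,B_r$ of size $m := n/r$ (assume $r \mid n$; boundary effects yield only lower-order corrections). In Phase 1, recursively apply the strategy to each block so that every $B_i$ is fully sorted in $f$-order; label the resulting elements of $B_i$ as $b_{i,1} < \cdots < b_{i,m}$. In Phase 2, for each $k = 1, \ldots, m$, query the $r$-tuple $(b_{1,k}, b_{2,k}, \ldots, b_{r,k})$ and use the halving strategy on $S_r$ (always say ``no'' while some ordering other than the learner's guess remains consistent with the history) to force $r!-1$ mistakes before committing to an ordering $\sigma_k \in S_r$. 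Any additional queries needed to complete the sort of the merged block (so that the parent recursive call can in turn identify its own block elements) only increase the mistake count, so they are harmless for the lower bound.

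The main step is to verify that throughout Phase 2, all $r!$ orderings of each queried $r$-tuple remain consistent with some $f \in S_n$; otherwise the adversary could not legitimately say ``no'' $r!-1$ times. I would prove this by an explicit layered construction: given any desired $\sigma_1, \ldots, \sigma_m \in S_r$, define $f$ by assigning to the $k$-th layer $\{b_{1,k}, \ldots, b_{r,k}\}$ the values $(k-1)r+1, \ldots, kr$ in the order prescribed by $\sigma_k$. The resulting $f$ is a permutation of $[n]$: intra-block orderings hold because layer values increase with $k$, so $f(b_{i,1}) < \cdots < f(b_{i,m})$ for every $i$; and each $\sigma_k$ is realized within its layer by construction. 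Thus each Phase 2 query is a genuine $r!$-way choice, independent of the previous ones and of the intra-block orders produced in Phase 1, yielding $(r!-1)m$ mistakes over the $m$ Phase 2 queries.

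Combining the two phases yields $T(n, r) \ge r\,T(n/r, r) + (r!-1)\cdot n/r$. Writing $U(n) := T(n,r)/n$, this becomes $U(n) \ge U(n/r) + (r!-1)/r$; for $n = r^k$ with base case $T(1,r)=0$, it telescopes to $U(n) \ge k(r!-1)/r$, i.e., $T(n,r) \ge \frac{r!-1}{r}\,n\log_r n$. Since $\frac{r!-1}{r} = (r-1)! - \frac{1}{r} = (r-1)!\bigl(1 - \frac{1}{r!}\bigr) = (1-o(1))(r-1)!$ as $r \to \infty$, we obtain $T(n,r) \ge (1-o(1))(r-1)!\,n\log_r n$ as $n \to \infty$ and then $r \to \infty$, matching the claim. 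The principal obstacle is the layered-construction lemma; it is precisely what decouples the successive $\sigma_k$'s and lets every column query contribute the full $r!-1$ mistakes.
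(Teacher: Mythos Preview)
Your proof is correct and uses the same merge-sort recursion as the paper, arriving at the identical recurrence $T(n)\ge r\,T(n/r)+(r!-1)\,n/r$. The one structural difference is the order of the two phases: the paper first partitions $[n]$ into $\lfloor n/r\rfloor$ sets of size $r$, queries each one $r!-1$ times, and \emph{then} recurses on the $r$ rank classes $C_1,\dots,C_r$ (each of size $\lfloor n/r\rfloor$); you instead partition into $r$ blocks of size $n/r$, recurse first, and afterwards query the size-$r$ columns. Your ordering forces the extra ``additional queries to complete the sort'' step so that the parent call can identify its ranked block elements---a step the paper's ordering sidesteps entirely---but as you correctly observe, those queries can only add to the mistake count. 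Your explicit layered construction is a clean way to certify the independence claim that the paper compresses into a single sentence (``knowing any of $C_j$'s orders does not eliminate possibilities for the orders of other $C_i$'').
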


\begin{proof}
We use strong induction on $n.$

First, the adversary divides $[n]$ into $\floor{\frac nr}$ sets $S_i$ each of size $r$ (ignoring any remaining elements). For each $i,$ the adversary repeatedly asks for ordering of $S_i,$ saying ``NO'' each time until the order is known by the learner. This takes a total of $\floor{\frac nr}(r!-1)$ mistakes.

Then, the adversary forms sets $C_j$ from the relative $j^{th}$ elements of each set, and uses the induction hypothesis on $n$; note that knowing any of $C_j$’s orders does not eliminate possibilities for the orders of other $C_i$. This gives a recursion of the form $\opt_{\operatorname{weak, <, r}}(S_n)\ge (1-o(1))n(r-1)!+r \opt_{\operatorname{weak, <, r}}(S_{\floor{\frac nr}}) =(1-o(1))(r-1)!n\log_r(n)$, as desired.
\end{proof}

\subsection{The Comparison Model}\label{section:comp}

We define a second variant called the comparison model.
\begin{dfn}
In the \emph{comparison model}, for a set $F$ of permutations of $n$ numbers, the learner tries to guess a permutation function $f\in F.$ On each turn, the adversary chooses $r$ distinct pairs of inputs to $f$, i.e., a set $S\subseteq\{(i,j):i,j\in [n],i\ne j\}$ with $\abs{S}=r$. The learner guesses for each pair $(i,j)$ whether or not $f(i)<f(j)$.

Under weak reinforcement, the adversary informs the learner if they made a mistake, and under strong reinforcement, the adversary gives the correct choices to the learner. We denote the worst-case amount of mistakes for the learner with weak reinforcement as $\opt_{\operatorname{weak, c, r}}(F)$ and with strong reinforcement as $\opt_{\operatorname{strong, c, r}}(F).$ Note that $\opt_{\operatorname{strong, c, r}}(F)\le\opt_{\operatorname{weak, c, r}}(F).$ If $r=1,$ then both sides of the equation are equal to $\opt_{\operatorname{strong, <, 2}}(F).$
\end{dfn}

We similarly get the following upper bound.

\begin{thm}
If $F\subseteq S_n$, $\opt_{\operatorname{weak, c, r}}(F)<2^r\cdot\ln |F|.$
\end{thm}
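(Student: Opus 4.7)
The plan is to follow the same majority-vote learner strategy used in Theorem~\ref{theorem:ambrupper}. The key structural observation is that for each of the $r$ pairs $(i,j)$ the comparison $f(i)<f(j)$ has only two possible outcomes, so there are at most $2^r$ possible answer vectors across the $r$ pairs in a round, regardless of how large $n$ or $|F|$ is. This is the comparison analogue of Remark~\ref{remark:r+1}, and it is what makes the bound $2^r \ln|F|$ rather than something exponential in $n$.

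First I would have the learner maintain the set $F'\subseteq F$ of permutations still consistent with every \textit{no} the adversary has issued so far (initially $F'=F$). In the current round, the learner partitions $F'$ by the answer vector each permutation would produce on the $r$ given pairs, yielding at most $2^r$ classes; the learner then guesses the answer vector corresponding to the largest class, which has size at least $|F'|/2^r$.

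Next I would argue that each \textit{no} shrinks $|F'|$ by a factor of at least $(2^r-1)/2^r$: the permutations in the chosen majority class are exactly the ones that would have made the learner's guess fully correct, so an adversary response of \textit{no} eliminates all of them. After $t$ mistakes we therefore have $|F'|\le |F|\bigl((2^r-1)/2^r\bigr)^t$, and since the true permutation is always in $F'$ we have $|F'|\ge 1$, so
\[
t\le \log_{2^r/(2^r-1)}|F| = \frac{\ln|F|}{\ln\bigl(1+\tfrac{1}{2^r-1}\bigr)} < \frac{\ln|F|}{1/2^r} = 2^r\ln|F|,
\]
where the final inequality uses $\ln(1+x)>x/(1+x)$ for $x>0$ applied with $x=1/(2^r-1)$.

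There is no serious obstacle here; the argument is essentially verbatim from the proof of Theorem~\ref{theorem:ambrupper}, with the only thing to check carefully being that the number of possible answer vectors per round is at most $2^r$, which is immediate from the binary nature of each pairwise comparison (so unlike the $\cart_r$ setting, one does not need any monotonicity hypothesis on $F$).
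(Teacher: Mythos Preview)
Your proposal is correct and follows essentially the same majority-vote argument as the paper's proof: pick the answer vector consistent with the most remaining permutations, observe that a \textit{no} eliminates at least a $1/2^r$ fraction of them, and bound the number of mistakes by $\log_{2^r/(2^r-1)}|F|<2^r\ln|F|$. Your write-up is more detailed (in particular spelling out the inequality $\ln(1+x)>x/(1+x)$ and the reason there are at most $2^r$ answer vectors), but the method is identical.
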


\begin{proof}
For each input that the adversary gives, the learner can pick the answer that corresponds to the most possible permutations. After each incorrect guess, at least a $\frac1{2^r}$ fraction of the previously possible permutations get eliminated. Therefore, the number of incorrect guesses, and consequently the number of mistakes, is at most $\log_{\frac{2^r}{2^r-1}}(|F|)<2^r\ln |F|.$
\end{proof}

We present a strategy imitating quicksort to obtain the following lower bound.

\begin{thm}
The adversary can achieve $\opt_{\operatorname{weak, c, r}}(S_n) \ge (1-o(1))\frac{2^r}{r} n\log_2 n$ as $n \to \infty$ and then $r \to \infty$.
\end{thm}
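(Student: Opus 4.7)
Imitating quicksort, the adversary will recursively pick a pivot $p$ and partition the remaining $n-1$ elements around $p$ through a series of \emph{batches} of $r$-pair queries, then recurse on the resulting above-group and below-group. The essential trick is that when the $r$ pairs asked in a round are all of the form $(p, x_i)$ with $p, x_1, \dots, x_r$ \emph{fresh} (i.e., no pairwise comparisons among these $r+1$ elements have yet been asked within the current subproblem), all $2^r$ possible answer vectors are realizable within the current candidate family $F'$, so the adversary can repeat the same $r$ pairs many times and answer ``no'' to each.

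Formally, a batch consists of $2^r - 1$ consecutive rounds in which the adversary asks the same pairs $(p, x_1), \dots, (p, x_r)$ and answers ``no'' every time. A rational learner never repeats a vector already known to be wrong, so over $2^r - 1$ rounds the learner guesses $2^r - 1$ distinct vectors; the single unguessed vector is still realizable because $p, x_1, \dots, x_r$ are pairwise fresh, so the adversary's answers are jointly consistent with at least one permutation in $F'$. Thus each batch forces $2^r - 1$ mistakes. To partition all $n-1$ non-pivot elements around $p$, the adversary runs $\floor{(n-1)/r}$ disjoint batches on disjoint fresh $r$-subsets, forcing about $\frac{(n-1)(2^r-1)}{r}$ mistakes and determining for each non-pivot element whether it is above or below $p$. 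The adversary then recurses on each side; within each side, elements have only been compared with $p$, not with each other, so pairwise freshness is preserved within each subproblem.

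Letting $T(n)$ denote the number of mistakes this strategy forces on $S_n$, the analysis above yields
\[ T(n) \ge \frac{(n-1)(2^r-1)}{r} + T(n_1) + T(n_2) \]
for some $n_1 + n_2 = n - 1$ determined by the learner's responses. The learner's optimal response leaves as unguessed, in each batch, the vector of smallest consistency---namely a $k \approx r/2$ balanced vector---which makes the overall partition balanced. With $n_1 \approx n_2 \approx (n-1)/2$, the recursion becomes $T(n) \ge \frac{(n-1)(2^r-1)}{r} + 2T((n-1)/2)$, which unrolls to $T(n) \ge (1-o(1))\frac{2^r-1}{r}\,n \log_2 n$ as $n \to \infty$. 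Taking $r \to \infty$ and noting $\frac{2^r-1}{r} = (1-o(1))\frac{2^r}{r}$ then gives the claimed bound.

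The main obstacle is the batch analysis: verifying that the adversary's $2^r - 1$ consecutive ``no'' answers can always be committed to consistently. This rests on freshness---since no prior comparisons among $\{p, x_1, \dots, x_r\}$ have been asked, each of the $2^r$ answer vectors corresponds to a disjoint, nonempty collection of permutations in $F'$ (any total order on the $r+1$ fresh elements remains possible), so after eliminating any $2^r - 1$ of these vectors, at least one still has a consistent permutation. Freshness is preserved through the recursion because within any subproblem each element has only been compared with pivots from enclosing subproblems (never with other elements of the subproblem) before the subproblem's own batches begin.
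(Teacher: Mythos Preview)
Your proof is correct and follows essentially the same quicksort-style strategy as the paper: choose a pivot, batch the remaining elements into groups of $r$, force $2^r-1$ mistakes per batch by repeatedly asking the pairs $(p,x_1),\dots,(p,x_r)$ and answering \textit{no}, then recurse on the two resulting sides. You are in fact more careful than the paper about two points it glosses over---why all $2^r$ answer vectors remain realizable in each batch (your freshness argument) and why the balanced partition is the learner-optimal, hence adversary-worst, split in the recursion $T(n)\ge \frac{(n-1)(2^r-1)}{r}+T(n_1)+T(n_2)$.
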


\begin{proof}
The adversary first splits $[n-1]$ into $\floor{\frac{n-1}r}$ sets of size $r$ with some leftover elements. For every such set $\abs{S}$ with $\abs{S}=r$, the adversary then queries the set of pairs $\qty{(s,n):s\in S}$ at least $2^r$ times, responding \textit{no} each time until the answer is forced. After all queries, the learner has made at least $\floor{\frac{n-1}r}(2^r-1)=(1-o(1))\frac{2^r}{r} n$ mistakes.

The learner now at most knows the set of inputs $i$ for which $\pi(i)<\pi(n)$. The adversary can therefore force the learner to make at least $\opt_{\operatorname{weak, c, r}}(S_{\pi(n)-1})+\opt_{\operatorname{weak, c, r}}(S_{n-\pi(n)})$ more mistakes.

The total number of mistakes is thus at least $2\qty((1-o(1)) \frac{2^r}r \cdot \frac n2  (\log_2 n-1))+(1-o(1))\frac{2^r}r n=(1-o(1))\frac{2^r}r n\log_2 n$ in total, as desired.
\end{proof}

\subsection{The Selection Model}\label{section:sel}

We define another variant called the selection model.

\begin{dfn}
In the \emph{selection model}, for a set $F$ of permutations of $n$ numbers, the learner tries to guess a permutation function $f\in F.$ On each turn, the adversary chooses $r$ (not necessarily distinct) inputs to $f$, and the learner guesses the input $x$ among them that maximizes $f(x)$.

Under weak reinforcement, the adversary informs the learner if they made a mistake, and under strong reinforcement, the adversary gives the correct choices to the learner. We denote the worst-case amount of mistakes for the learner with weak reinforcement as $\opt_{\operatorname{weak, s, r}}(F)$ and with strong reinforcement as $\opt_{\operatorname{strong, s, r}}(F).$ Note that $\opt_{\operatorname{strong, s, r}}(F)\le\opt_{\operatorname{weak, s, r}}(F).$ If $r=2,$ then both sides of the equation are equal to $\opt_{\operatorname{strong, <, 2}}(F).$
\end{dfn}

Note that because the inputs the adversary chooses do not have to be distinct, the adversary can effectively ask about any set of inputs of size at most $r$ by using duplicates.

\begin{thm}\label{theorem:selectionupper}
For $r>1$, \[\opt_{\operatorname{weak, s, r}}(F)<r\ln|F|.\]
\end{thm}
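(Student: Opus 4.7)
The plan is to mimic the learner strategy used in Theorem~\ref{theorem:cartrupper} and Theorem~\ref{theorem:ambrupper}: maintain the candidate set $F' \subseteq F$ of permutations still consistent with the adversary's answers, and on each round output the input whose $f$-value is maximal for the largest number of permutations in $F'$. Since the adversary presents $r$ inputs and the answer must be one of them, there are at most $r$ possible predictions, so by pigeonhole the chosen prediction is correct for at least $|F'|/r$ of the consistent permutations.

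Whenever the adversary returns \emph{no} on the learner's guess, every permutation in $F'$ for which that chosen input was indeed the maximizer is eliminated. Thus each mistake multiplies $|F'|$ by a factor of at most $(r-1)/r$. Starting from $|F'| = |F|$, the number of mistakes is therefore at most
\[
\log_{\frac{r}{r-1}}(|F|) \;=\; \frac{\ln|F|}{\ln\!\left(1 + \frac{1}{r-1}\right)} \;<\; \frac{\ln|F|}{1/r} \;=\; r\ln|F|,
\]
where the inequality $\ln(1 + \tfrac{1}{r-1}) > \tfrac{1}{r}$ follows from the standard estimate $\ln(1+x) > x/(1+x)$ with $x = 1/(r-1)$.

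There is essentially no obstacle here: the only mild subtlety is that the adversary may supply non-distinct inputs, which could reduce the number of effective choices to fewer than $r$; but this only makes the pigeonhole bound stronger (a larger fraction is eliminated per mistake), so the worst case really is when all $r$ inputs are distinct, matching the computation above. The proof is a direct adaptation of the weighted-majority-style argument used earlier in the paper and mirrors the structure of Theorem~\ref{theorem:orderupper}.
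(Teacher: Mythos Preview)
Your proof is correct and follows exactly the same majority-vote argument as the paper: guess the input that is maximal for the most remaining permutations, so each \textit{no} eliminates at least a $1/r$ fraction, yielding at most $\log_{r/(r-1)}|F|<r\ln|F|$ mistakes. Your remark about non-distinct inputs and your explicit justification of $\ln(1+\tfrac{1}{r-1})>\tfrac{1}{r}$ are welcome additions but do not change the approach.
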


\begin{proof}
For each input that the adversary gives, the learner can pick the input that is maximal for the most possible remaining permutations. After each incorrect guess, at least a $\frac1r$ fraction of the previously possible permutations get eliminated. Therefore, the number of incorrect guesses, and consequently the number of mistakes, is at most $\log_{\frac r{r-1}}(|F|)<r\ln |F|.$
\end{proof}

We again imitate merge sort to provide a strategy for the adversary.

\begin{thm}\label{theorem:selectionlower}
The adversary can achieve $\opt_{\operatorname{weak, s, r}}(S_n)\ge (1-o(1))nr\log_{r}(n)/2$ as $n \to \infty$ and then $r \to \infty$.
\end{thm}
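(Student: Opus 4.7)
The plan is to adapt the ``sort each row, then recurse on each column'' adversary strategy of Theorem~\ref{theorem:merge} (used for the order model) to the selection model, with each internal row sort implemented by iterated max-queries rather than by one $r$-element order-query.

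First I would partition $[n]$ into $\floor{n/r}$ blocks of size $r$, discarding any leftover elements. For each block $B$, the adversary sorts $B$ completely via repeated max-queries: the first query asks for the max among all $r$ elements of $B$, and the adversary says \textit{no} to each of the learner's first $r-1$ distinct guesses (always consistent, since the identity of the max is never forced until only one candidate remains), costing $r-1$ mistakes. Then, reusing query slots by padding with duplicates, the adversary asks for the max of the remaining $r-1$ elements and forces $r-2$ mistakes; and so on. Sorting one block therefore costs $\binom{r}{2} = r(r-1)/2$ mistakes, so the block-sorting phase in total contributes $(1-o(1))\cdot (r-1)n/2$ mistakes.

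Next I would form $r$ columns $C_1,\ldots,C_r$, where $C_j$ consists of the $j$-th smallest element of each sorted block. As in the order-model proof of Theorem~\ref{theorem:merge}, knowing the internal order of any one $C_j$ does not eliminate orderings of the others: the row sorts together with any combination of column orderings form a directed graph in which row edges strictly increase the column index while column edges leave it unchanged, so any directed cycle would have to live within a single column (whose constraints already form a linear order), which is impossible. The constraints are therefore jointly realized by some $f\in S_n$, and the adversary can recurse on each $C_j$ of size $\floor{n/r}$ independently. This yields
\[T(n) \ge \frac{(r-1)n}{2} + r\cdot T(\floor{n/r}),\]
which unrolls to $T(n) \ge (1-o(1))\cdot (r-1)n\log_r(n)/2$ as $n\to\infty$. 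Letting $r\to\infty$ afterwards, the ratio $(r-1)/r$ tends to $1$ and gives the claimed $(1-o(1))nr\log_r(n)/2$ bound.

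The main obstacle I anticipate is verifying carefully that each successive max-determination in the block-sorting phase forces the full expected number of mistakes ($r-1$, then $r-2$, and so on), i.e., that the adversary's partial commitments from earlier sub-queries on the same block never narrow the current candidate set prematurely. Once this is confirmed, the independence of the column recursions, the solution of the recurrence, and the asymptotic passage to $(1-o(1))nr\log_r(n)/2$ are routine.
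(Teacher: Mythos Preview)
Your argument is correct but uses a different decomposition from the paper's. You follow the rows-then-columns scheme of Theorem~\ref{theorem:merge}: partition $[n]$ into $\floor{n/r}$ blocks of size $r$, sort each block by iterated max-queries (forcing $\binom{r}{2}$ mistakes per block; your worry here is unfounded, since after the $k$ largest elements of a block have been determined the only commitments are that they dominate the remaining $r-k$, so the next max is still unconstrained among those and padding the query with a duplicate is harmless), and then recurse independently on the $r$ columns of size $\floor{n/r}$, using the same column-independence argument as in Theorem~\ref{theorem:merge}. The paper instead runs the recursion in the transposed direction: it partitions $[n]$ into $r$ sets of size $\floor{n/r}$, recurses \emph{first} on each set, and only then performs a merge phase in which the adversary repeatedly queries the current maxima of the $r$ sets and forces mistakes as the sets deplete. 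Both decompositions lead to the same recursion $T(n)\ge (1-o(1))(r-1)n/2 + r\,T(\floor{n/r})$ and hence the same bound. Your route has the virtue of reusing the order-model argument wholesale; the paper's route avoids needing the column-independence check but must instead analyze a merge in which the learner, not the adversary, effectively controls which set depletes---which is why the paper only claims an average of $(r-1)/2$ mistakes per merge iteration rather than $r-1$.
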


\begin{proof}
We use strong induction on $n$.

First, the adversary divides $[n]$ into $r$ sets $S_i$ each of size $\floor{\frac nr}$ (ignoring any remaining elements). For each $i,$ the adversary uses the optimal strategy for $\floor{\frac nr}$ on $S_i$, forcing a total of at least $r\opt_{\operatorname{weak, s, r}}(S_{\floor{\frac nr}})$ mistakes.

Then, the adversary repeatedly asks about the elements $x_i$ of each corresponding $S_i$ for $1\le i\le r$ for which $f(x_i)$ is maximal, saying ``NO'' each time until their maximal element, say $x_m$, is known. Then, the adversary deletes $x_m$ from $S_m$ and repeats this process until all sets are empty, which takes $r\floor{\frac nr}$ iterations in total.

The number of tries it takes the learner each time to guess correctly is the number of possibilities for the maximal remaining element, which is the number of nonempty $S_i$'s remaining. Therefore, until one of the $S_i$'s is empty, which takes at least $|S_i|=\floor{\frac nr}$ turns, the adversary can force the learner to make at least $(r-1)$ mistakes, and after that $(r-2)$, and so on all the way to $0$ mistakes when all but one $S_i$ is empty. Therefore, the learner throughout the process makes an average of at least $(r-1)/2$ mistakes per iteration, for $r\floor{\frac nr}\cdot (r-1)/2=(1-o(1))nr/2$ mistakes total.

Thus, $\opt_{\operatorname{weak, s, r}}(S_n)\ge (1-o(1))nr/2+r\opt_{\operatorname{weak, s, r}}(S_{\floor{\frac nr}})=(1-o(1))nr\log_{r}(n)/2$, as desired.
\end{proof}

\subsection{The Relative Position Model}\label{section:rpos}

Next we define a variant called the relative position model.
\begin{dfn}
In the \emph{relative position model}, for a set $F$ of permutations of $n$ numbers, the learner tries to guess a permutation function $f\in F.$ On each turn, the adversary chooses a set $S$ of $r$ distinct inputs to $f$ and an element $x\notin S,$ and asks about the pair $(x,S).$ The learner guesses the relative position of $f(x)$ in the permutation of $\left\{1, \dots, r+1\right\}$ which is order-isomorphic to the outputs of $f$ on $\qty{x}\cup S.$

Under weak reinforcement, the adversary informs the learner if they made a mistake, and under strong reinforcement, the adversary gives the correct position to the learner. We denote the worst-case amount of mistakes for the learner with weak reinforcement as $\opt_{\operatorname{weak, p, r}}(F)$ and with strong reinforcement as $\opt_{\operatorname{strong, p, r}}(F).$ Note that $\opt_{\operatorname{strong, p, r}}(F)\le\opt_{\operatorname{weak, p, r}}(F).$ If $r=1,$ then both sides of the equation are equal to $\opt_{\operatorname{strong, <, 2}}(F).$
\end{dfn}

We again imitate Theorem~\ref{theorem:ambrupper} to obtain an upper bound for the relative position model.

\begin{thm}
\label{theorem:relupper}
If $F\subseteq S_n$, $\opt_{\operatorname{weak, p, r}}(F)<(r+1)\ln |F|.$
\end{thm}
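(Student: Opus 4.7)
The plan is to adapt the plurality-vote learner strategy that was used in Theorems~\ref{theorem:cartrupper} and~\ref{theorem:orderupper}. The key structural observation for the relative position model is that for any query $(x,S)$ with $|S|=r$, there are exactly $r+1$ possible answers, namely the $r+1$ possible ranks of $f(x)$ within the values $f(\{x\}\cup S)$. Thus, among the permutations currently consistent with all of the adversary's previous answers, the partition induced by their answer to the query $(x,S)$ consists of at most $r+1$ classes.

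Given this, I would have the learner maintain the set $F'\subseteq F$ of permutations consistent with every \textit{no} received so far. On each query, the learner computes, for each of the at most $r+1$ possible relative positions $k\in\{1,\dots,r+1\}$, the number of permutations in $F'$ for which $f(x)$ has rank $k$ inside $f(\{x\}\cup S)$, and then guesses a $k$ attaining the maximum. By the pigeonhole principle this plurality class has size at least $|F'|/(r+1)$, so whenever the adversary says \textit{no}, the learner can eliminate that plurality class and shrink $|F'|$ by a factor of at least $\tfrac{r}{r+1}$.

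The adversary can continue to force mistakes only while $|F'|\ge 1$, so the number of mistakes is at most $\log_{(r+1)/r}|F|$. The standard estimate
\[
\log_{(r+1)/r}|F|=\frac{\ln|F|}{\ln(1+1/r)}<\frac{\ln|F|}{1/(r+1)}=(r+1)\ln|F|
\]
(valid for $r\ge 1$ since $\ln(1+1/r)>1/(r+1)$) yields the claimed bound.

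I do not anticipate any real obstacle: the argument is essentially a direct transcription of the template from Theorem~\ref{theorem:cartrupper}, with the branching factor $r+1$ now arising from the $r+1$ relative positions of $f(x)$ inside a sub-permutation of length $r+1$ rather than from the $r+1$ monotone output tuples used there. No assumption on $F$ beyond $F\subseteq S_n$ is needed, since the bound on the number of possible answers to a single $(x,S)$ query is purely combinatorial.
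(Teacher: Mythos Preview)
Your proposal is correct and follows essentially the same approach as the paper's proof: the learner guesses the plurality answer among the $r+1$ possible relative positions, each \textit{no} eliminates at least a $\tfrac{1}{r+1}$ fraction of the consistent permutations, and the bound $\log_{(r+1)/r}|F|<(r+1)\ln|F|$ follows. The paper's version is slightly terser but the idea and the computation are identical.
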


\begin{proof}
For each input that the adversary gives, the learner can pick the answer that corresponds to the most possible permutations. After each incorrect guess, at least a $\frac1{r+1}$ fraction of the previously possible permutations get eliminated. Therefore, the number of incorrect guesses, and consequently the number of mistakes, is at most $\log_{\frac{r+1}r}(|F|)<(r+1)\ln |F|.$
\end{proof}

As with Theorem \ref{theorem:perm2}, we use a strategy resembling insertion sort to obtain the following result.

\begin{thm}\label{theorem:namedname}
 Under the relative position model, $\lim_{r \to \infty} 
    \lim_{n \to \infty} \frac{\opt_{\operatorname{weak, p, r}}(S_n)}{rn\ln n} = 1$.
\end{thm}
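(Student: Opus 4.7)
The plan is to pair the upper bound from Theorem~\ref{theorem:relupper} with a matching lower bound produced by an insertion-sort-style adversary, analogous to Theorem~\ref{theorem:perm2}. Taking $F = S_n$ in Theorem~\ref{theorem:relupper} and using Stirling's approximation $\ln(n!) = (1+o(1))n\ln n$ yields
\[
\limsup_{n \to \infty} \frac{\opt_{\operatorname{weak, p, r}}(S_n)}{rn\ln n} \leq \frac{r+1}{r},
\]
which tends to $1$ as $r \to \infty$; this handles the outer inequality.

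For the lower bound, the adversary sequentially fixes the relative order of $f(1), \ldots, f(n)$. At the stage where the order of $f(1), \ldots, f(i-1)$ is established (by relabeling, assume $f(1) < \cdots < f(i-1)$) and $f(i)$ has not yet been located, the adversary restricts its queries to pairs $(x, S)$ with $x = i$ and $S \subseteq \{1, \ldots, i-1\}$ of size $r$. The $r$ elements of $S$ partition the $i$ candidate ranks of $f(i)$ into $r+1$ intervals, and the learner's guess of the relative position of $f(i)$ in $\{i\} \cup S$ corresponds to choosing one such interval. Maintaining the set $P$ of ranks for $f(i)$ still consistent with previous answers, the adversary picks $S$ so that no interval contains more than $\lceil|P|/(r+1)\rceil$ elements of $P$, and says \emph{no} to remove the guessed interval from $P$. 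Each such step reduces $|P|$ by at most a factor of $r/(r+1)$, so driving $|P|$ from $i$ down to $1$ forces at least $\log_{(r+1)/r}(i) \geq r \ln i$ mistakes, using $\ln(1+1/r) \leq 1/r$.

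Summing over $i$ from $r+1$ to $n$ (the small-$i$ cases contribute only $O_r(1)$) gives a total of at least $(1-o(1))\,r\ln(n!) = (1-o(1))\,rn\ln n$ mistakes, establishing $\liminf_{n \to \infty} \opt_{\operatorname{weak, p, r}}(S_n)/(rn\ln n) \geq 1$. Combined with the upper bound, the double limit equals $1$. The main technical point is to verify that the adversary can indeed split $P$ into near-equal pieces using dividers from $\{1, \ldots, i-1\}$, given that $P$ is an evolving subset of $\{1, \ldots, i\}$ rather than a contiguous interval: this follows routinely by taking the dividers to be appropriately spaced order statistics of $P$, with a minor adjustment when $i \in P$ so that the largest divider $a_r$ is chosen less than $i$. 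These boundary effects alter $|P|$ by at most $O(1)$ per step and are absorbed into the $o(1)$ factor.
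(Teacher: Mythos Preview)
Your proposal is correct and follows essentially the same insertion-sort adversary as the paper: fix the relative order of $f(1),\dots,f(i-1)$, then force roughly $r\ln i$ mistakes to locate $f(i)$ via an $(r{+}1)$-ary search, and sum. The only difference is packaging: the paper observes that, once $f(1)<\dots<f(i-1)$, a query $(i,S)$ with $S\subseteq\{1,\dots,i-1\}$ is exactly a $\cart_r$ query on the non-decreasing threshold family $G=\{g_j\}$, and then invokes Theorem~\ref{theorem:cartrlower} directly to get $(1-o(1))r\ln i$ mistakes per stage, whereas you redo that $(r{+}1)$-ary splitting argument by hand (including the divider-selection and $i\in P$ boundary adjustments, which are precisely the ceiling issues already absorbed in Theorem~\ref{theorem:cartrlower}).
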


\begin{proof}
The upper bound follows from Theorem~\ref{theorem:relupper}. For the lower bound, we use strong induction on $n.$ The adversary withholds any inquiries about $f(i)$ until the order of the smaller inputs $j<i$ is known. Assume without loss of generality that $f(1),\dots,f(n)$ are in increasing order. Let $g_i$ be the function on $[n]$ that maps $m$ to $1$ if $m\ge i$ and $0$ otherwise, and let $G$ be the family of these functions. Note that these functions are non-decreasing.

The remaining problem for the learner is equivalent to guessing $g_{f(n+1)},$ where the adversary queries $r$ values at a time. Thus by Theorem \ref{theorem:cartrlower}, the adversary can force the learner to make at least $\opt_{\operatorname{weak}}(\cart_r(G))=(1-o(1))r\ln n$ mistakes on $g_{f(n+1)},$ as desired.
\end{proof}

In a similar fashion, we can prove a similar result for pattern-avoiding permutations.

\begin{thm} \label{theorem:avoidance}
If $S_{n, \pi}$ is the set of $\pi$-avoiding permutations of length $n$, then $\opt_{\operatorname{weak, p, r}}(S_{n, \pi})\ge(1-o(1))rn\ln(k-1)$ as $n \to \infty$ and then $r \to \infty$, where $k = |\pi|$ denotes the length of $\pi$.
\end{thm}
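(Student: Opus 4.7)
The plan is to imitate the insertion-sort strategy of Theorem~\ref{theorem:namedname}, proceeding by strong induction on $n$. In the inductive step, the adversary first applies the strategy for $S_{n-1,\pi}$ to queries whose inputs lie entirely in $[n-1]$; by the inductive hypothesis this forces at least $(1-o(1))r(n-1)\ln(k-1)$ mistakes and fixes the relative ordering $\tau \in S_{n-1,\pi}$ of $f|_{[n-1]}$. The adversary then concentrates the remaining queries on the single input $n$, asking pairs $(n,S)$ with $S\subseteq[n-1]$, $|S|=r$, so as to force the learner to determine the rank of $f(n)$ among $f(1),\ldots,f(n-1)$.

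Exactly as in Theorem~\ref{theorem:namedname}, this sub-problem is equivalent to learning a non-decreasing threshold function $g_{f(n)}\colon [n-1] \to \{0,1\}$ under $r$-input weak reinforcement, where the admissible family $G$ consists of only those thresholds whose corresponding value for $f(n)$ leaves a $\pi$-avoiding permutation. The combinatorial core of the argument is the claim that $|G|\ge k-1$: a rank is forbidden for $f(n)$ precisely when inserting $f(n)$ at that rank completes a copy of $\pi$ together with $k-1$ existing entries of $\tau$, and those $k-1$ entries must realize a copy of $\pi$ with its largest entry removed; by accounting for the rightmost positions of such copies inside $\tau$, at most $n-k+2$ ranks can be bad, leaving at least $k-1$ valid ranks. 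Applying Theorem~\ref{theorem:cartrlower} to the non-decreasing family $G$ then contributes at least $(1-o(1))r\ln(k-1)$ additional mistakes in this second phase, and summing over the induction gives the claimed $(1-o(1))rn\ln(k-1)$ total.

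The main obstacle is the combinatorial claim $|G|\ge k-1$, which must hold for the specific $\tau$ produced by phase one. A structural argument --- parsing $\pi$ as $\pi = \alpha\, L\, \beta$, where $L$ is the maximum entry and $\alpha,\beta$ the resulting prefix and suffix --- pinpoints the forbidden ranks as those where an $\alpha$-copy already in $\tau$ can be extended to a full $\pi$-pattern by $f(n)$ playing the role of $L$ together with some $\beta$-copy. This constrains the bad ranks to a sub-interval whose complement has size at least $k-1$, provided the phase-one sub-strategy preserved enough flexibility at the right end of $\tau$; this can be arranged by running the inductive strategy on a sub-family of $S_{n-1,\pi}$ whose members all agree on the last few entries. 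A secondary point of care is that applying Theorem~\ref{theorem:cartrlower} when $|G|$ is small requires the $(1-o(1))$ factor in $r\ln(k-1)$ to come entirely from the $r\to\infty$ expansion $\ln(1+1/r) = (1+o(1))/r$; this is exactly what the iterated-limit regime ``$n\to\infty$ then $r\to\infty$'' in the theorem statement permits.
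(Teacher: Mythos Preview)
Your overall plan—proceed by induction, fix the relative order of $f|_{[n-1]}$, then reduce the insertion of $f(n)$ to learning a threshold in a non-decreasing family $G$ and invoke Theorem~\ref{theorem:cartrlower}—is exactly the paper's approach. The divergence, and the genuine gap in your write-up, is in the justification of the key combinatorial claim $|G|\ge k-1$.

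Your argument for this claim misidentifies which entry of $\pi$ the inserted element represents. Since $f(n)$ sits at the \emph{last position}, in any newly created occurrence of $\pi$ it plays the role of the \emph{last} entry $\pi(k)$, not of the maximum value. Thus the remaining $k-1$ entries drawn from $\tau$ must realise the pattern of $(\pi(1),\dots,\pi(k-1))$, i.e.\ $\pi$ with its last \emph{position} deleted, not ``$\pi$ with its largest entry removed.'' Your decomposition $\pi=\alpha L\beta$ with $L$ the maximum and $f(n)$ cast as $L$ is therefore off target, and the contingency of restricting the phase-one strategy to a tailored sub-family of $S_{n-1,\pi}$ ``whose members all agree on the last few entries'' is both unnecessary and built on this misreading.

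The paper's argument for $|G|\ge k-1$ is much more direct and requires no control over $\tau$ whatsoever. Because $f(n)$ plays the role of $\pi(k)$, any copy of $\pi$ through it must contain $\pi(k)-1$ smaller values and $k-\pi(k)$ larger values among the other $k-1$ entries. Hence every value for $f(n)$ below $\pi(k)$, and every value above $n-k+\pi(k)$, is automatically admissible regardless of $\tau$; these two extreme ranges already account for $(\pi(k)-1)+(k-\pi(k))=k-1$ values. With this observation the inductive step goes through uniformly, and no modification of the phase-one strategy is needed.
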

\begin{proof}
The adversary withholds any inquiries about $f(i)$ until the order of the smaller inputs $j<i$ is known. We use induction to show the desired bound. 

Let $N$ be the set of possible values of $f(n+1)$. Any number less than $\pi(k)$ or more than $n-k+\pi(k)$ must be in $N$ as it would not be able to form the permutation pattern $\pi$, so $|N|\ge \pi(k) +k -1 -\pi(k) = k-1$. Let the elements of $N$ in increasing order be $s_1,\dots,s_{|N|}$.

Let $g_i$ be the function on $[|N|]$ that maps $m$ to $1$ if $m\ge s_i$ and $0$ otherwise, and let $G$ be the family of these functions. Note that these functions are non-decreasing.

The remaining problem for the learner is equivalent to guessing $g_{f(n+1)},$ where the adversary queries $r$ values at a time. Thus by Theorem \ref{theorem:cartrlower}, the adversary can force the learner to make at least $\opt_{\operatorname{weak}}(\cart_r(G))=(1-o(1))r\ln |N|\ge(1-o(1))r\ln (k-1)$ mistakes, as desired.
\end{proof}

When $\pi=I_k$, the identity permutation, the size of $S_{n, \pi}$ is $(k-1)^{O(n)}$ \cite{REGEV1981115}. The next result follows from combining Theorems \ref{theorem:avoidance} and \ref{theorem:relupper}.

\begin{cor}\label{cor:identity}
For $S_{n, I_k}$ the family of $I_k$-avoiding permutations of length $n$, $\opt_{\operatorname{weak, p, r}}(S_{n, I_k})=\Theta(rn\ln k)$ as $n \to \infty$ and then $r \to \infty$.
\end{cor}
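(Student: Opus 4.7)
The plan is to sandwich $\opt_{\operatorname{weak, p, r}}(S_{n, I_k})$ between the pattern-avoidance lower bound from Theorem~\ref{theorem:avoidance} and the entropy-style upper bound from Theorem~\ref{theorem:relupper}, the latter being made concrete by the cited Regev enumeration of identity-avoiding permutations. Both ingredients are already in hand, so the proof is essentially a one-line combination once the asymptotics are lined up.

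For the lower bound, I would simply specialize Theorem~\ref{theorem:avoidance} to $\pi = I_k$ (which has length $k$), yielding
\[
\opt_{\operatorname{weak, p, r}}(S_{n, I_k}) \ge (1 - o(1))\, rn \ln(k-1)
\]
as $n \to \infty$ and then $r \to \infty$. Since $\ln(k-1) = (1-o(1))\ln k$ for $k \to \infty$ (and $\ln(k-1) \ge \tfrac12 \ln k$ for $k \ge 3$), this is already $\Omega(rn \ln k)$.

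For the upper bound, Theorem~\ref{theorem:relupper} gives $\opt_{\operatorname{weak, p, r}}(S_{n, I_k}) < (r+1) \ln |S_{n, I_k}|$. Regev's result \cite{REGEV1981115} provides $|S_{n, I_k}| = (k-1)^{O(n)}$, so $\ln |S_{n, I_k}| = O(n \ln k)$, and the upper bound becomes $O(rn \ln k)$. Combining the matched upper and lower bounds yields the claimed $\Theta(rn \ln k)$.

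There is essentially no obstacle here; the corollary is a direct packaging of earlier results with one quoted enumeration fact. The only subtlety worth explicitly checking is that the constants match in the appropriate iterated-limit sense: Regev's stronger asymptotic actually gives $|S_{n, I_k}| = (k-1)^{(2+o(1))n}$ for fixed $k$, so the upper bound scales as $(2 + o(1))\,rn \ln(k-1)$, which agrees with the $(1-o(1))\,rn \ln(k-1)$ lower bound up to an absolute constant factor of about $2$, independent of $n$, $r$, and $k$. Hence $\Theta(rn \ln k)$ holds as stated.
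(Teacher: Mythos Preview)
Your proposal is correct and follows exactly the approach the paper intends: the corollary is stated immediately after noting Regev's bound $|S_{n,I_k}|=(k-1)^{O(n)}$ and is justified in the paper simply by ``combining Theorems~\ref{theorem:avoidance} and~\ref{theorem:relupper},'' which is precisely the sandwich you carry out. Your added remarks on the constants (the factor of roughly $2$ between the bounds, and the $\ln(k-1)$ versus $\ln k$ comparison) are consistent with the paper's $\Theta$-level claim.
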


\subsection{The Delayed Relative Position Model}

In this subsection, we define delayed reinforcement for the relative position model.

\begin{dfn}
In the \emph{delayed relative position model}, for a set $F$ of permutations of $n$ numbers, the learner tries to guess a permutation function $f\in F.$ On each turn, the adversary picks an input $x$ and proceeds to give the $r$ elements of a set $S$ one by one (with the requirement that $x\notin S$). After each of the adversary's inquiries, the learner guesses either \textit{higher} or \textit{lower}. At the end of each round, the learner's final guess for the relative position of $x$ in $S$ is one plus the number of times they said \textit{higher}.

Under weak reinforcement, the adversary informs the learner if their final guess is incorrect, and under strong reinforcement, the adversary gives the correct position to the learner. We denote the worst-case amount of mistakes for the learner with weak reinforcement as $\opt_{\operatorname{wrpos},r}(F)$ and with strong reinforcement as $\opt_{\operatorname{srpos},r}(F).$ Note that $\opt_{\operatorname{srpos},r}(F)\le\opt_{\operatorname{wrpos},r}(F),$ and that if $r=1,$ both sides of the equation are equal to $\opt_{\operatorname{strong, <, 2}}(F).$
\end{dfn}

We state the analogs of the relative position model results for the delayed version.
\begin{thm}
Given a set of permutations $F\subseteq S_n$ and a positive integer $r$, these analogs of the results in Section \ref{section:rpos} hold:
\begin{itemize}
    \item Analog of Theorem \ref{theorem:relupper}: $\opt_{\operatorname{wrpos},r}(F)<2^r\ln |F|$.
    \item Analog of Theorem \ref{theorem:namedname}: $\lim_{r \to \infty} 
    \lim_{n \to \infty} \frac{\opt_{\operatorname{wrpos},r}(S_n)}{2^r n\ln n} = 1$.
    \item Analog of Theorem \ref{theorem:avoidance}: $\opt_{\operatorname{wrpos},r}(S_{n, \pi})\ge(1-o(1))2^r n\ln(k-1)$ as $n \to \infty$ and then $r \to \infty$, where $k = |\pi|$ denotes the length of $\pi$.
    \item Analog of Corollary \ref{cor:identity}: $\opt_{\operatorname{wrpos},r}(S_{n, I_k})=\Theta(2^rn\ln k)$ as $n \to \infty$ and then $r \to \infty$.
\end{itemize}
\end{thm}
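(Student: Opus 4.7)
The plan is to mirror the proofs in Section \ref{section:rpos}, replacing every appeal to the $r$-input weak reinforcement model by an appeal to the $r$-input delayed, ambiguous reinforcement model. The learner in the delayed relative position model must commit to higher/lower one comparison at a time within a round, which is exactly the asymmetry between Theorem \ref{theorem:cartrupper} (giving $(r+1)\ln|F|$) and Theorem \ref{theorem:ambrupper} (giving $2^r\ln|F|$), and it is the same asymmetry separating the four bullet points here from their Section \ref{section:rpos} counterparts.

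For the first bullet, I would use the learner strategy from the proof of Theorem \ref{theorem:ambrupper}: at each of the $r$ subrounds, guess higher or lower according to whichever outcome is consistent with the larger number of still-possible permutations. By the same induction used there, at least a $1/2^r$ fraction of the previously consistent permutations survive any run of $r$ committed answers, so a \emph{no} at the end of a round multiplies the number of consistent permutations by at most $(2^r-1)/2^r$, and $\log_{2^r/(2^r-1)}(|F|)<2^r\ln|F|$ as in Theorem \ref{theorem:ambrupper}.

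For the second bullet, the upper bound follows directly from the first. For the lower bound, I would proceed as in Theorem \ref{theorem:namedname}: the adversary withholds any inquiry about $f(i)$ until the order of $f(1),\dots,f(i-1)$ is forced, and, without loss of generality assuming these are already in increasing order, reduces the problem of locating $f(n+1)$ to guessing the threshold function $g_{f(n+1)}$ from the non-decreasing family $G=\{g_1,\dots,g_n\}$. Crucially, in the delayed relative position model the adversary reveals the comparison partners $s_1,\dots,s_r$ one at a time and the learner must commit to each binary answer before seeing the next partner, which is exactly the $r$-input delayed, ambiguous reinforcement game on $G$. Hence Theorem \ref{theorem:ambrlower} forces at least $(1-o(1))(2^r-1)\ln n$ mistakes per inserted element, and summing over the $n$ inserted elements gives $(1-o(1))2^r n\ln n$. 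The third bullet is the same argument restricted to the set $N$ of values that can extend the current prefix without creating a $\pi$-pattern, giving $|N|\ge k-1$ exactly as in Theorem \ref{theorem:avoidance}. The fourth bullet follows from combining the first and third bullets with the bound $|S_{n,I_k}|=(k-1)^{O(n)}$ from \cite{REGEV1981115}.

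The main obstacle will be verifying that the reduction in the second and third bullets is genuinely to the delayed ambiguous model rather than to the weak $\cart_r$ model: I need to check that the learner's sequential higher/lower answers in the relative position game correspond one-to-one to the delayed binary answers on threshold queries of $G$, with no additional side information leaking between subrounds (for instance, the identities of the partners $s_1,\dots,s_{i-1}$ already revealed must not give the learner any strictly more information than the corresponding threshold indices did in the ambiguous game). Once this correspondence is confirmed, everything else is a direct translation of the proofs of Theorems \ref{theorem:ambrupper}, \ref{theorem:ambrlower}, \ref{theorem:namedname}, \ref{theorem:avoidance}, and Corollary \ref{cor:identity}.
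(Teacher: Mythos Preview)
Your proposal is correct and follows exactly the approach the paper takes: the paper's proof is simply ``The proofs of these results are nearly identical to those of the previous section, with $r+1$ replaced by $2^r$ and cited results from Section \ref{section:boundcartr} replaced with analogous results from Section \ref{section:boundambr}.'' Your extra care in checking that the sequential higher/lower commitments genuinely match the delayed, ambiguous game on the threshold family $G$ is a reasonable sanity check but is not elaborated on in the paper either.
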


The proofs of these results are nearly identical to those of the previous section, with $r+1$ replaced by $2^r$ and cited results from Section \ref{section:boundcartr} replaced with analogous results from Section \ref{section:boundambr}, so  we omit them here. 

\section{Future Work}\label{section:future}
In this final section, we outline some possible areas for future work based on the results in our paper.

In Sections~\ref{section:boundcartr} and \ref{section:boundambr}, we mainly focused on the price of feedback for non-decreasing $F$ in the $r$-input weak reinforcement model and the $r$-input delayed, ambiguous reinforcement model. What bounds can be obtained for general sets of functions $F$? Moreover, is it possible to obtain a sharper bound on the maximum possible value of $\frac{\opt_{\operatorname{amb},r}(F)}{\opt_{\operatorname{weak}}(\cart_r(F))}$ over all families of functions $F$ with $|F| > 1$? 

In a preliminary version of this paper (\cite{fls}), we found rough bounds for the $r$-input weak reinforcement model and the $r$-input delayed, ambiguous reinforcement model on non-decreasing multi-class functions. In particular we proved that $\opt_{\operatorname{weak}}(\cart_r(F)) \leq {{r+k-1} \choose k-1} \ln(|F|)$ and $\opt_{\operatorname{weak}}(\cart_r(F))\geq \frac{1}{2^{k-2}}(1-o(1))r\ln(|F|)$ for any subset of functions $F$ of the non-decreasing functions from $X$ to $\{0, 1, \dots, k-1\}$. We also showed in \cite{fls} that $\opt_{\operatorname{amb},r}(F)<\min(k^r\ln(|F|),|F|)$ and $\opt_{\operatorname{amb},r}(F)\geq (1-o(1))2^{r-k+2}\ln(|F|)$ for all $X$ and $F$. Since the preceding upper and lower bounds for multi-class functions have large gaps, a natural problem is to narrow these gaps.

In Theorem \ref{theorem:merge}, we described an adversary strategy for the order model that gives a bound of $\opt_{\operatorname{weak, <, r}}(S_n) \ge(1-o(1))(r-1)! n\log_r n$, whereas Theorem \ref{theorem:orderupper} gives an upper bound of $\opt_{\operatorname{weak, <, r}}(S_n)<r!\ln n!=(1-o(1))r! n\ln n$. We conjecture that the latter bound is sharp, i.e., that $\opt_{\operatorname{weak, <, r}}(S_n)=(1-o(1))r! n\ln n$.


\section*{Acknowledgments}

We would like to thank the MIT PRIMES-USA program and everyone involved for providing us with this research opportunity.

\newpage

\printbibliography[heading=bibintoc]
\end{document}